\newcommand \longver 0
\tikzstyle{utility}=[diamond,draw=black,draw=blue!50,fill=blue!10,inner sep=0mm, minimum size=8mm]
\tikzstyle{select}=[rectangle,draw=black,draw=blue!50,fill=blue!10,inner sep=0mm, minimum size=6mm]
\tikzstyle{hidden}=[dashed,draw=black,fill=red!10]
\tikzstyle{RV}=[circle,draw=black,draw=blue!50,fill=blue!10,inner sep=0mm, minimum size=6mm]
\tikzstyle{place}=[circle,draw=black,draw=blue!50,fill=blue!20,inner sep=0mm, minimum size=9mm]
\newcommand{\pluseq}{\mathrel{+}=}
\renewcommand \Pr {\mathop{\mathbb{P}}\nolimits}
\newcommand \argmax {\mathop{\arg \max}}
\newcommand \bel {\beta}
\newcommand \mdp {\mu}
\newcommand \pol {\pi}
\newcommand \generator {\mathcal{P}}
\newcommand \TSpol {\pol^{\textrm{TS}}_{\bel}}
\newcommand \kBOpol {\pol_{\bel}^{\steps}}
\newcommand \BOpol {\pol_{\bel}^{*}}
\newcommand \DSpol {\pol^{\textrm{DS}}_{\bel}}
\newcommand \TSutil {\util^{\textrm{TS}}_{\bel}}
\newcommand \TSutilH {\util^{\textrm{TS}}_{\bel_h}}
\newcommand \kBOutil {\util_{\bel}^{\steps}}
\newcommand \kBOutilH {\util_{\bel_h}^{\steps}}
\newcommand \BOutil {\util_{\bel}^{*}}
\newcommand \BOutilH {\util_{\bel_h}^{*}}
\newcommand \DSutil {\util^{\textrm{DS}}_{\bel}}
\newcommand \disc {\gamma}
\newcommand \util {v}
\newcommand \defn {\mathrel{\triangleq}}
\newcommand \cset[2]{\left\{#1 ~\middle|~ #2 \right\}}
\newcommand \norm[1]{\left\|#1\right\|}
\newcommand \pnorm[2]{\norm{#1}_{#2}}
\newcommand \MDPdist[2]{D(#1, #2)}
\newcommand \horizon {T}
\newcommand \nsamples {M}
\newcommand \npolicies {N}
\newcommand \steps {K}
\newcommand \finalStage {H}
\newcommand{\expect}{\mathbb{E}}
\theoremstyle{plain}
\newtheorem{theorem}{Theorem}
\newtheorem{lemma}{Lemma}
\newtheorem{definition}{Definition}
\newtheorem{assumption}{Assumption}
\theoremstyle{remark}
\newtheorem*{rep@theorem}{\rep@title}
\newcommand{\newreptheorem}[2]{%
	\newenvironment{rep#1}[1]{%
		\def\rep@title{\textbf{#2 \ref{##1}}}%
		\begin{rep@theorem}}%
		{\end{rep@theorem}}}
\newcommand \linf[1]{\left\|#1\right\|_{\infty}}
\def\clap#1{\hbox to 0pt{\hss#1\hss}}
\def\mathclap{\mathpalette\mathclapinternal}
\def\mathclapinternal#1#2{%
           \clap{$\mathsurround=0pt#1{#2}$}}
\begin{document}
\runningauthor{Divya Grover, Debabrota Basu, Christos Dimitrakakis}
	% If your paper is accepted and the title of your paper is very long,
	% the style will print as headings an error message. Use the following
	% command to supply a shorter title of your paper so that it can be
	% used as headings.
	%
	% \runningtitle{I use this title instead because the last one was very long}

	% If your paper is accepted and the number of authors is large, the
	% style will print as headings an error message. Use the following
	% command to supply a shorter version of the authors names so that
	% they can be used as headings (for example, use only the surnames)
	%
	% \runningauthor{Surname 1, Surname 2, Surname 3, ...., Surname n}

	\twocolumn[

	\aistatstitle{Bayesian Reinforcement Learning via Deep, Sparse Sampling}
	%\aistatsauthor{ Anonymous}
	%\aistatsaddress{Institute}
	%]
	\aistatsauthor{ Divya Grover$^1$ \And Debabrota Basu$^1$ \And  Christos Dimitrakakis$^{1,2}$ }
	\aistatsaddress{Chalmers University of Technology$^1$ \And University of Oslo$^2$}
	]

\begin{abstract}
We address the problem of Bayesian reinforcement learning using efficient model-based online planning. We propose an optimism-free Bayes-adaptive algorithm to induce deeper and sparser exploration with a theoretical  bound on its performance relative to the Bayes optimal as well as lower computational complexity. The main novelty is the use of a candidate policy generator, to generate long-term options in the planning tree (over beliefs), which allows us to create much sparser and deeper trees. Experimental results on different environments show that in comparison to the state-of-the-art, our algorithm is both computationally more efficient, and obtains significantly higher reward over time in discrete environments.
\end{abstract}

\section{INTRODUCTION}
In Reinforcement Learning~\citep{sutton1998rl}, an agent sequentially interacts with an unknown environment with the objective of maximising its total reward over time. As the environment is unknown to the agent, it must carefully balance its actions in order to learn more about the environment  (\emph{exploration}) and obtain reward with high certainty (\emph{exploitation}) as well. This dilemma of balancing exploration in the environment with exploiting the existing knowledge is referred to as the \emph{exploration--exploitation trade-off}.

\emph{Bayesian Reinforcement Learning} (BRL) solves this trade-off  by constructing and using a probability distribution over possible models of the environment and trying to maximise total reward in expectation while marginalising over all possible models. This automatically takes into account the uncertainty about the environment. However, this ``Bayes-optimal'' policy is generally intractable as it requires performing dynamic programming over an exponentially large tree. Simpler solutions, such as Thompson sampling~\citep{thompson1933lou}, are known to be nearly optimal in some settings, such as multi-armed bandits~\citep{Kaufmann:Thompson}. Alternatively, one can construct approximate versions of the planning tree through Monte Carlo rollouts, sparse sampling, and limited look-ahead~\citep{dimitrakakis2013monte, castro2010smarter, guez2012efficient}.

	In this paper, we introduce the DSS (Deeper Sparser Sampling)
	algorithm to alleviate problems with existing approximations of the
	Bayes-optimal planner. DSS uses \emph{policy samples} to create a
	deep tree with a smaller branching factor. We show that at any step,
	our algorithm produces an action that is with high probability close
	to the Bayes-optimal, and demonstrate experimentally that it
	outperforms the state-of-the-art BRL methods with significantly less
	computation.

	The rest of the paper is organised as follows. In Section~\ref{sec:method}, we describe the framework of Markov Decision Processes (MDP) and Bayesian reinforcement learning. In Section~\ref{sec:lit}, we discuss related work and the outline of our contribution. Section~\ref{sec:algo} elaborates the DSS algorithm. Then, we follow up by theoretical and experimental analysis of DSS in Section~\ref{sec:theory} and~\ref{sec:experiment} respectively. Some technical proofs are relegated to the Appendix.

	\begin{figure*}[t!]
		\centering
		\begin{subfigure}{0.5\textwidth}%\vspace*{-1em}
			\centering
			\begin{tikzpicture}
			\node[RV] at (0,0) (st) {$\omega_t$};
			\node[select] at (1,-1) (at) {$a_t^1$};
			\node[select] at (1,1) (at2) {$a_t^2$};
			\draw[->] (st) to (at);
			\draw[->] (st) to (at2);
			\node[RV] at (2,-1.5) (s2t1) {$\omega_{t+1}^{11}$};
			\draw[dashed, ->] (s2t1) to (3,-2);
			\node[RV] at (2,-0.5) (s2t2) {$\omega_{t+1}^{12}$};
			\node[RV] at (2,0.5) (s2t3) {$\omega_{t+1}^{21}$};
			\node[RV] at (2,1.5) (s2t4) {$\omega_{t+1}^{22}$};
			\draw[dashed, ->] (s2t4) to (3,2);
			\draw[->] (at) to (s2t1);
			\draw[->] (at) to (s2t2);
			\draw[->] (at2) to (s2t3);
			\draw[->] (at2) to (s2t4);
			\node[select] at (3,1) (a3t3) {$a_{t+1}^1$};
			\draw[dashed, ->] (a3t3) to (4,2);
			\node[select] at (3,0) (a3t1) {$a_{t+1}^2$};
			\node[select] at (3,-1) (a3t2) {$a_{t+1}^3$};
			\draw[->] (s2t2) to (a3t1);
			\draw[->] (s2t2) to (a3t2);
			\draw[->] (s2t3) to (a3t3);
			\node[RV] at (4,-2) (s3t1) {$\omega_{t+2}^{31}$};
			\node[RV] at (4,-1) (s3t2) {$\omega_{t+2}^{32}$};
			\node[RV] at (4,0) (s3t3) {$\omega_{t+2}^{21}$};
			\node[RV] at (4,1) (s3t4) {$\omega_{t+2}^{22}$};
			\draw[->] (a3t2) to (s3t1);
			\draw[->] (a3t2) to (s3t2);
			\draw[->] (a3t1) to (s3t3);
			\draw[->] (a3t1) to (s3t4);
			\end{tikzpicture}
			\caption{Full tree expansion.}\label{fig:tree1}
		\end{subfigure}%\vspace*{-.5em}\hspace*{4em}
		\begin{subfigure}{0.5\textwidth}
			\centering
			\begin{tikzpicture}
			\node[RV] at (0,0) (st) {$\omega_t$};
			\node[select] at (1,-1) (at) {$\pi_t^1$};
			\node[select] at (1,1) (at2) {$\pi_t^2$};
			\draw[->] (st) to (at);
			\draw[->] (st) to (at2);
			\node[RV] at (4,-1.5) (s2t1) {$\omega_{t+K}^{11}$};
			\node[RV] at (4,-0.5) (s2t2) {$\omega_{t+K}^{12}$};
			\node[RV] at (4,0.5) (s2t3) {$\omega_{t+K}^{21}$};
			\node[RV] at (4,1.5) (s2t4) {$\omega_{t+K}^{22}$};
			\draw[->] (at) to (s2t1);
			\draw[->] (at) to (s2t2);
			\draw[->] (at2) to (s2t3);
			\draw[->] (at2) to (s2t4);
			\end{tikzpicture}%\vspace*{-.5em}
			\caption{Deeper $\&$ Sparser tree expansion.} \label{fig:tree2}
		\end{subfigure}%\vspace*{-.5em}\\
		\caption{Visualising tree expansion. $\omega_{t}^{ij}$ denotes the information state at time $t$ given action $i$ and having observed state $j$.}%\vspace*{-.5em}
	\end{figure*}
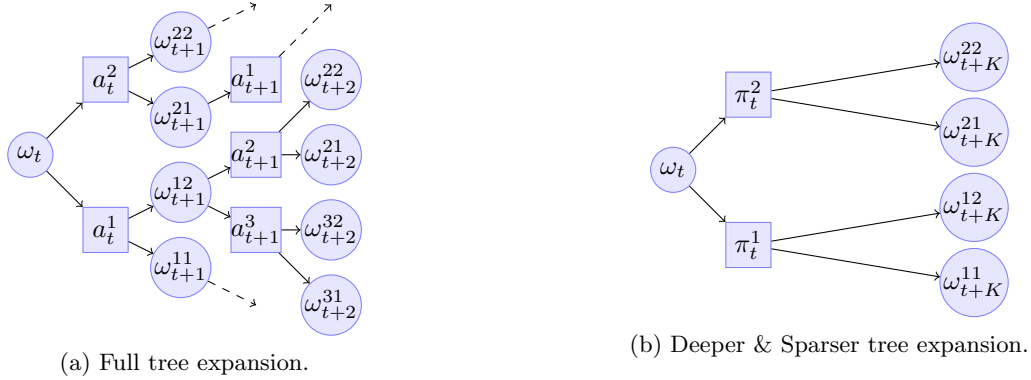

	%%%%%%%%%%%%%%%%%
	\section{BACKGROUND AND RELATED WORK}\label{sec:method}
	\subsection{Markov Decision Process (MDP)}
	Markov Decision Process (MDP) is a discrete-time stochastic process that provides a formal framework for reinforcement learning problems.
	\begin{definition}
		An MDP $\mu = (S,A,P,R)$ is composed of a state space $S$, an action space $A$, a reward distribution $R$ and a transition function $P$. The transition function $P \defn \Pr_\mu(s_{t+1}|s_t ,a_t )$ dictates the distribution over next states $s_{t+1}$ given the present state-action pair $(s_t, a_t)$. The reward distribution $R \defn \Pr_\mu(r_{t+1}|s_t ,a_t)$  dictates the obtained reward with $r \in [0,1]$.
		% Thus, $\prob(r_{t+1}, s_{t+1}|s_t ,a_t ) = R(r_{t+1}|s_t ,a_t )P(s_{t+1}|s_t ,a_t)$.
		We shall also use $\Pr_\mdp(r_{t+1}, s_{t+1}|s_t ,a_t )$ to denote the joint distribution of next states and actions of MDP $\mdp$.
	\end{definition}
	A policy $\pol$ belonging to a policy space $\Pi$ is an algorithm for selecting actions given present state and previous observations. A policy is Markov if at any time $t$, the action $a_t \in A$ chosen by the policy only depends on the current state $s_t$, so that the action distribution can be written as $\pol_t(a_t \mid s_t)$.
	\\
	The value function of a policy for a specific MDP is the expected sum of discouted rewards obtained from time $t$ to $T$ while selecting actions in the MDP $\mdp$:
	\begin{align}
	% \[
	V_{t,T}^{\pi,\mdp}(s)
	=
	\expect^\pi_\mdp ( \sum_{k=1}^{T} \disc^k r_{t+k}  \mid s_t = s), \label{eqn:val}
	% \]
	\end{align}
	where $\gamma \in (0, 1]$ is called the discount factor and $\expect^\pi_\mdp$ denotes the expectation under the Markov chain generated by a policy $\pi$ acting on the MDP $\mdp$. Let us define the infinite horizon discounted value function of a policy $\pi$ on an MDP $\mu$ as $ V_\mdp^\pi \defn \lim_{T\to\infty} V_{0,T}^{\pi,\mdp}$. Now, we define the \emph{optimal value function} to be $V_\mdp^* \triangleq \; \max_\pi V_\mdp^\pi$, and the \textit{optimal policy}  to be $\pol^*_\mdp \defn \argmax_\pi V_\mdp^\pi$. If the MDP is known, the optimal policy and value function is computable via backwards induction (alias, value iteration)~\citep{Puterman:MDP:1994}. %Whenever it is clear from context, superscripts and subscripts shall be omitted for brevity.

	\subsection{Bayes Adaptive MDP (BAMDP)}
	In reality, the underlying MDP is unknown to the reinforcement learning algorithm. This amounts to a trade-off between information seeking actions for performing better exploration and acting optimally given the current knowledge i.e. exploitation. This exploration-exploitation trade-off is one of the central issues in reinforcement learning. Bayesian Reinforcement Learning (BRL), specifically the information state formulation~\citep{dearden99bayesian,duff2002olc}, provides a framework to quantify this trade-off using Bayesian representation.
	\\
	% Following the ideas from \cite{duff2002olc}, we use a Bayesian framework to represent our uncertainty.
	Following the Bayesian formulation, we maintain a belief distribution $\bel_t$  over the possible MDP models $\mdp \in \mathcal{M}$.\footnote{More precisely, we can define a measurable space $(\mathcal{M},\mathfrak{M})$, where $\mathcal{M}$ is the possible set of MDPs, and $\mathfrak{M}$ is a suitable $\sigma$-algebra.} With an appropriate prior belief $\bel_0(\mdp)$, we obtain a sequence of posterior beliefs $\bel_t (\mdp)$ that represents our subjective belief over the MDPs at time $t$, depending on the latest observations. By Bayes' rule, the posterior belief at time $t+1$ is
	\begin{align}
	\bel_{t+1}(\mdp) &\triangleq \frac{\Pr_\mdp(r_{t+1}, s_{t+1}|s_t ,a_t)\bel_t(\mdp)}{\int_{\mathcal{M}} \Pr_\mdp^{'}(r_{t+1}, s_{t+1}|s_t ,a_t)\bel_t(\mdp^{'}) d\mdp^{'}}. % \nonumber \\
	% &= \frac{\Pr_\mdp(r_{t+1}, s_{t+1}|s_t ,a_t)\bel_t(\mdp)}{\expect_{\bel_t}[\Pr_\mdp(r_{t+1}, s_{t+1}|s_t ,a_t)]}.
	\label{eq:belief-update}
	\end{align}
	Now, we define the Bayesian value function $\util$ analogously to the MDP value function:
	\begin{equation}
	\util^\pol_\bel(s) \defn \int_{\mathcal{M}} V_\mdp^\pol (s) \bel(\mdp) d\mdp.
	\label{eq:bayes-value}
	\end{equation}
	Bayesian value function is the average utility that the decision maker is expected to obtain given its current belief $\bel$ and policy $\pol$ for selecting future actions. A policy computed using Bayesian value function can in general be adaptive, and indeed this holds for the Bayes-optimal policy.
	For completeness, we also define the Bayes-optimal utility $\util_\bel^{*}(s)$, i.e. the utility of the Bayes-optimal policy.
	\begin{equation}
	\util_\bel^{*}(s) \triangleq \max_{\pol \in \Pi} \int_{\mathcal{M}} V_\mdp^\pol (s) \bel(\mdp) d\mdp.
	\label{eq:root-sampling}
	\end{equation}
	% Let $\expect$ denote expectation under any appropriate joint distribution.

	It is well known that by combining the original MDP’s state $s_t$ and belief $\bel_t$ into a hyper-state $\omega_t$, we obtain another MDP called the Bayes Adaptive MDP (BAMDP).
	The optimal policy for a BAMDP is the same as the Bayes-optimal policy for the corresponding MDP.
	\begin{definition}[BAMDP]
		A Bayes Adaptive Markov Decision Process (BAMDP) $\tilde{\mdp} \defn (\Omega, A, \nu, \tau)$ is a representation for an unknown MDP $\mu = (S,A,P,R)$ with a space of information states $\Omega = S \times \mathfrak{B}$, where $\mathfrak{B}$ is an appropriate set of belief distributions on $\mathcal{M}$. %and $S,A$ are the common state and action sets of all $\mdp \in \mathcal{M}$.
		At time $t$, the agent observes the information state $\omega_t = (s_t ,\bel_t )$ and takes action $a_t \in A$. We denote the transition distribution as $\nu(\omega_{t+1}|\omega_t ,a_t )$, the reward distribution as $\tau(r_{t+1} | \omega_t ,a_t)$, and $A$ as the common action space.
	\end{definition}
	For each $s_{t+1}$, the next hyper-state $\omega_{t+1} = (s_{t+1},\bel_{t+1})$  is uniquely determined since $\bel_{t+1}$ is unique given $(\omega_t,s_{t+1})$ and can be computed using equation (\ref{eq:belief-update}). Therefore the information state $\omega_t$ preserves the Markov property. This allows us to treat the BAMDP as an infinite-state MDP with $\nu(\omega_{t+1}|\omega_t ,a_t )$, and $\tau(r_{t+1} |\omega_t,a_t )$ defined as the corresponding transition and reward distributions respectively. The transition and reward distributions are defined as the marginal distributions
	\begin{align*}
	\nu(\omega_{t+1}|\omega_t ,a_t ) \defn \int_{\mathcal{M}} \Pr_\mdp(s_{t+1}|s_t ,a_t)\bel_t(\mdp) d\mdp,\\
	\tau(r_{t+1} |\omega_t,a_t) \defn \int_{\mathcal{M}} \Pr_\mdp(r_{t+1}|s_t ,a_t)\bel_t(\mdp) d\mdp.
	\end{align*}
	Though the Bayes-optimal policy is generally adaptive in the original MDP, it is Markov with respect to the hyper-state of the BAMDP. In other words, $\omega_t$ represents a sufficient statistic for the observed history.

	Since the BAMDP is an MDP on space of hyper-states, we can use backwards induction (alias, value iteration) starting from the set of terminal hyper-states $\Omega_T$ and proceeding backwards to $T - 1,\ldots, t$ following
	\begin{align}
	V_t^{*}(\omega) = \max_{a \in A} \expect[r \mid \omega, a] + \gamma \sum_{\mathclap{\omega^{'}\in\Omega_{t+1}}} \nu(\omega^{'}|\omega,a) V^{*}_{t+1}(\omega^{'}),
	\label{eq:backwards-induction}
	\end{align}
	where $\Omega_{t+1}$ is the reachable set of hyper-states from hyper-state $\omega_t$.
	Equation~\eqref{eq:root-sampling} implies Equation~\eqref{eq:backwards-induction} and vice-versa, i.e. that $\util^*_\bel(s) = V^*_0(\omega)$ for $\omega = (s, \bel)$ (Appendix B). Hence, we can obtain Bayes-optimal policies through backwards induction. Due to the large hyper-state space, this is only feasible for small horizons $T$ in practice.
	% \footnote{Infinite horizon problems can be approximately solved by expanding (or looking ahead) the state-space tree to some finite depth and using bounds on the value function at leaf nodes~\cite{Abbeel2011}. For BAMDP, this approach involves starting with a state $\omega_0$ and building a belief tree to horizon $\horizon$ using equation \eqref{eq:backwards-induction}, as shown in Algorithm \ref{algo1} and illustrated in Figure \ref{fig:tree1}. This gives the $\horizon$-horizon Bayes-optimal policy. This method can be extended to infinite state spaces through the sparse-sampling approximation \cite{DBLP:conf/ijcai/KearnsMN99}, as discussed in~\cite{duff2002olc,wang:bayesian-sparse-sampling:icml:2005}.}

	\begin{algorithm}[t!]
		\caption{FHTS (Finite Horizon Tree Search)}
		\begin{algorithmic}
			\STATE {\bfseries Parameters:} Horizon $\horizon$
			\STATE {\bfseries Input:} current hyper-state $\omega_h$ and depth $h$.
			\IF{$h=\horizon$}
			\STATE return V($\omega_h$) = 0
			\ENDIF
			\FOR{all actions a}
			\FOR{all next states $s_{h+1}$}
			\STATE $\bel_{h+1}$ = UpdatePosterior($\omega_h$,$s_{h+1}$,a)  (eq.~\ref{eq:belief-update})
			\STATE $\omega_{h+1}$ = ($s_{h+1},\bel_{h+1}$)
			\STATE $V(\omega_{h+1}) = \textrm{FHTS}(\omega_{h+1},h+1)$%\dbcomment{What is Algo 1 here? << its this algrithm itself}
			\ENDFOR
			\ENDFOR
			\STATE Q($\omega_h,a$) = 0
			\FOR{all $\omega_{h+1},a$}
			\STATE $Q(\omega_h,a) \mathrel{+}= \nu(\omega_{h+1}|\omega_h,a)\times V(\omega_{h+1})$	\ENDFOR
			\STATE return $\max_a Q(\omega_h,a)$
		\end{algorithmic}
		\label{algo1}
	\end{algorithm}

	%%%%%%
	\subsection{Related Work}
	\label{sec:lit}
	BRL was initially investigated in~\citep{silver1963markovian,martin1967bayesian}. The problem of computational intractability of the Bayes-optimal solution motivated researchers to design approximate techniques. \cite{dearden98bayesian,dearden99bayesian} proposed Bayesian Q-learning and %\cite{duff2002olc} gives one of the most comprehensive introduction to the subject and perhaps the only analytical approach to solving this problem
	\cite{duff2003diffusion} proposed a diffusion based approximation of Bayesian Markov chains. A vast research has been conducted towards model based BRL algorithms, which is comprehensively compiled in a survey by
	\cite{ghavamzadeh2015bayesian}. We classify these algorithms in two categories: Myopic and Lookahead.

	\textbf{Myopic:} Myopic algorithms do not lookahead in future, rather they take actions depending on present information. Thompson sampling~\citep{thompson1933lou} maintains a posterior distribution over transition models, samples an MDP and chooses the optimal policy for the sample. A reformulation of this for BRL is proposed as Bayesian DP in~\citep{strens2000bayesian}. The Best Of Sampled Set (BOSS)~\citep{Asmuth:BOSS} algorithm generalizes this idea to a multi sample optimistic approach. Monte-Carlo Utility Estimates for BRL (MCBRL)~\citep{dimitrakakis2011robust,dimitrakakis:gbrl} generalizes these ideas to lower bound policies and gradient based value function estimates for improved performance.

	\textbf{Lookahead:}
	The simplest algorithm is to calculate and solve the BAMDP up to some horizon $\horizon$, as outlined in \emph{Algorithm~\ref{algo1}} and is illustrated in Figure~\ref{fig:tree1}. A simple modification to it is Sparse sampling by \cite{DBLP:conf/ijcai/KearnsMN99}, which instead only iterates over a set of sampled states. When applied to BAMDP belief tree\footnote{We freely use the term `tree' or `belief tree' to denote the planning tree generated by the algorithms in the hyper-state space of BAMDP.}, the Kearns algorithm would still have to consider all primitive actions. \cite{wang:bayesian-sparse-sampling:icml:2005} improved upon this by using Thompson sampling to only consider a subset of promising actions. High branching factor of the tree still makes planning with deep horizon computationally expensive. Thus, more scalable algorithms, such as BFS3 \citep{asmuth2011approaching} and BAMCP \citep{guez2012efficient}, were proposed.
	Similar to \citep{wang:bayesian-sparse-sampling:icml:2005}, BFS3 also selects a subset of actions but with an optimistic action selection strategy, though the backups are still performed using Bellman equation. BAMCP takes a Monte-Carlo approach to sparse lookahead in belief-augmented version of Markov decision process. BAMCP also uses optimism for action selection.%, which can sometimes yield suboptimal strategy~\cite{coquelin2007bandit}.
	Unlike BFS3, the next set of hyper-states are sampled from an MDP sampled at the root\footnote{Note that ideally the next observations should be sampled from the $P(s_{t+1}|\omega_t)$ instead of $P(s_{t+1}|\omega_{t_o})$, i.e. the next-state marginal at the root belief.}. Since posterior inference is expensive for any non-trivial belief model, BAMCP further applies lazy sampling and rollout policy, inspired by their application in tree search problems~\cite{kocsis2006bandit}.

	\textbf{Our contribution:}
	Unlike other approaches, we focus on reducing the branching factor by considering $\steps$-step policies instead of primitive actions when planning. These policies are generated through (possibly approximate) Thompson sampling. This approach is rounded by using Sparse sampling~\citep{DBLP:conf/ijcai/KearnsMN99}. The reduced branching allows us to build a deeper tree. The intuition why this might be desirable is that if the belief changes slowly enough, an adaptive policy that is constructed out of a tree of $\steps$-step stationary policies will still be approximately optimal. This intuition is supported by the theoretical analysis in Section~\ref{sec:theory}. In Section~\ref{sec:theory}, we prove that our algorithm results in nearly-optimal planning under certain mild assumptions regarding the belief. Section~\ref{sec:experiment}  experimentally shows that we get better policies than the state-of-the-art with less computation time.  The freedom to choose a policy generator allows the algorithm scale smoothly. We choose Policy Iteration (PI) and a variant of Real Time Dynamic Programming (RTDP) for different sizes of environments.

	\section{DEEPER \& SPARSER SAMPLING (DSS)}\label{sec:algo}
	The core idea of DSS algorithm is to plan in the belief tree, not at the individual action level, but at the level of $\steps$-step policies. Figure~\ref{fig:tree2} illustrates this concept graphically.
	At each time-step $t$, Algorithm~\ref{alg:dss} is called with the current state $s$ and belief $\bel$ as input, with additional parameters controlling how the tree is approximated. The algorithm then generates the tree and calculates the value of each policy candidate recursively (for $\finalStage$ stages or episodes), in the following manner:
	\begin{enumerate}
		\item Line 6: Generate $\npolicies$ MDPs from the current belief $\bel_t$, and for each MDP $\mdp_i$ use the policy generator $\generator:\mdp \rightarrow \pol$ to generate a policy $\pol_i$. This gives a policy set $\Pi_\bel$ with $|\Pi_\bel| = \npolicies$.
		\item Line 10-18: Run each policy for $\steps$ steps, collecting total $\steps$-step discounted reward $R$ in BAMDP. Note that we sample the reward and next-state from the marginal (Line 13-14), and also update the posterior (Line 16).
		\item Line 19-21: Make recursive call to DSS at the end of $\steps$ steps. Repeat the process just described for $\nsamples$ times. This gives an $\nsamples$-sample estimate of that policy's utility $\util_\bel^\pol$.
	\end{enumerate}
	Note that the fundamental control unit that we are trying to find here is a policy, hence Q-values are defined over ($\omega_t$,$\pi$) tuples. Since we now have policies at any given tree node, we re-branch only after running those policies for $\steps$ steps. Hence we can increase the effective depth of the belief tree upto $HK$ for the same computational budget. This allows for deeper lookahead and ensures that the approximation error propagated is also smaller as the error is discounted by $\gamma^{HK}$ instead of $\gamma^{H}$.
	We elaborate this effect in the theoretical analysis.

	\begin{algorithm}[t!]
		\caption{DSS}
		\begin{algorithmic}[1]
			\STATE {\bfseries Parameters:} Number of stages $\finalStage$, steps $\steps$, no. of policies $\npolicies$, no. of samples per policy $\nsamples$, policy generator $\generator$
			\STATE {\bfseries Input:} hyper-state $\omega_h=(s_h,\bel_h)$,  depth $h$.
			\IF{$h=\steps \finalStage$}
			\STATE return $V(\omega_h)$ = 0
			\ENDIF
			\STATE $\Pi_{\bel_h} = \{\generator(\mdp_i)| \mdp_i \backsim \omega_{h}, i \in \mathbb{Z}, i \leq \npolicies \}$
			\FOR{all $\pi \in \Pi_{\bel_h}$}
			\STATE $Q(\omega_h,\pi) = 0$
			\FOR{$1$ to $\nsamples$}
			\STATE $R=0,c=\gamma^h,k=0$
			\STATE $\omega_k=\omega_h,s_k=s_h,\bel_k=\bel_h, a_k=\pi(s_h)$
			\FOR{$k = 1, \ldots, \steps$}
			\STATE $s_{k+1} \backsim \nu(\omega_{k+1}|\omega_k ,a_k ) $ %\int_{\mathcal{M}} P_\mdp(s_{t+1},r_{t+1}|s,a)  \dd \bel(\mdp)$
			\STATE $r_{k+1} \backsim \tau(r_{k+1} | \omega_k ,a_k)$
			\STATE $R \pluseq c \times r_{k+1}$; $c = c \times \disc$
			\STATE $\bel_{k+1} = \textrm{UpdatePosterior}(\omega_{k}, s_{k+1},a_k)$ (from eq.~\ref{eq:belief-update})
			\ENDFOR
			\STATE
			$Q(\omega_h, \pol) +\!= R + \textrm{DSS}(\omega_\steps,h+\steps)$
			\ENDFOR
			\STATE $Q(\omega_h,\pi) /= \nsamples$
			\ENDFOR
			\STATE \textbf{return} $\argmax_{\pi} Q(\omega_h,\pi)$
		\end{algorithmic}
		\label{alg:dss}
	\end{algorithm}

	\section{THEORETICAL ANALYSIS}\label{sec:theory}
	The fundamental analysis of~\cite{DBLP:conf/ijcai/KearnsMN99} for any approximate tree based planning algorithm (like Algorithm~\ref{algo1}) is due to union bound on sampling approximation of every action-value at each node in the tree, where bound is obtained due to discounting of error with increasing depth. In reality, due to exponential nodes with $|A||S|$ branching per level, computational limit is quickly reached and leaf-approximations are needed. We improve on this approach by imposing certain assumptions about the belief in the planning tree and using the duality between Eq.\eqref{eq:root-sampling} and Eq.\eqref{eq:backwards-induction}.

	In order to prove that DSS is nearly optimal, we need two assumptions, and consider an idealized version of the algorithm, ignoring some approximations done for computational simplicity.\footnote{In particular, the sampled policies are not strictly coming from the Thompson sampling distribution, due to the use of partial policy iteration or RTDP.}

	\begin{assumption}
		The belief $\bel_h$ in the planning tree is such that $\epsilon_h \leq \epsilon_0/h$, where $h \geq 1$, $\epsilon_h = ||\hat{\bel}_h - \bel_h||_1$ and $\hat{\bel}_h$ is the constant belief approximation at the start of episode $h$.
		\label{ass:approximate-belief}
	\end{assumption}
	The first assumption states that as we go deeper in the planning tree, the belief error reduces. The intuition is that if the belief concentrates at a certain rate, then so does error of Bayes utility for any Markov policy, by the virtue of its definition (shown in Appendix A, Lemma \eqref{lem:stage-error}). The $\epsilon_0$ denotes a constant dependent on the current root belief $\bel$.
	\begin{assumption}
		$\bel_t(\mdp)\bel_t(\mdp') \leq \frac{C}{\MDPdist{\mdp}{\mdp'}}$, where
		$\MDPdist{\mdp}{\mdp'} \defn \max_{s,a} \pnorm{P_\mdp(\cdot \mid s, a) -P_{\mdp'}(\cdot \mid s, a)}{1}$.
		\label{ass:belief-correlation}
	\end{assumption}
	The second assumption states that belief correlation across similar MDPs is higher than across dissimilar ones.

	Our algorithm finds a near-Bayes-optimal policy, as stated in Theorem~\ref{thm:pacbamdp}.
	\begin{theorem}
		Under Assumptions~\ref{ass:approximate-belief} and~\ref{ass:belief-correlation},  $\forall s \in S$
		\begin{align*}
		\DSutil(s) &\geq \util^*_\bel(s) - \left( 2\epsilon_0 \steps \ln  \frac{1}{1 - \disc^{\steps}} + \frac{2(\steps C + \disc^\steps)}{(1 - \disc)}\right)\\
		&\qquad\qquad - \sqrt{\frac{\ln M/\delta}{2N (1 - \gamma)^{2}}}
		\end{align*}
		with probability $1 - \delta$. Here, $\horizon$ is the horizon, divided by parameter $\steps$ into $\finalStage$ stages, i.e, $T=\steps H$. In addition, at each node of the sparse tree, we evaluate $\npolicies$ policies for $\nsamples$ times.
		\label{thm:pacbamdp}
	\end{theorem}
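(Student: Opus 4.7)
The plan is to decompose the gap $\util^*_\bel(s) - \DSutil(s)$ into three components: (i) a \emph{policy-restriction} error from using only $\steps$-step stationary policies drawn by Thompson sampling instead of the fully adaptive Bayes-optimal policy, (ii) a \emph{belief-approximation} error from treating the belief as (approximately) constant within each of the $\finalStage$ stages (Assumption~\ref{ass:approximate-belief}), and (iii) a \emph{Monte-Carlo} estimation error from scoring each candidate with $\nsamples$ rollouts. Each of these should deliver one of the three terms in the final bound, and the overall error will be chained through the $\finalStage$ stages by backwards induction on the sparse belief tree, using the duality between Eq.~\eqref{eq:root-sampling} and Eq.~\eqref{eq:backwards-induction}.

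For (i), I would view Bayes-optimal planning at stage $h$ as $\max_\pi \int V^\pi_\mdp\, \bel_h(\mdp)\dd\mdp$ and compare this against $\max_{\pi \in \Pi_{\bel_h}} \int V^\pi_\mdp\, \bel_h(\mdp)\dd\mdp$, where $\Pi_{\bel_h}$ is the Thompson-sampled candidate set. Assumption~\ref{ass:belief-correlation} controls the probability that $\npolicies$ independent draws miss a ball (in the $\MDPdist{\cdot}{\cdot}$ metric) around any target MDP: because $\bel(\mdp)\bel(\mdp') \leq C/\MDPdist{\mdp}{\mdp'}$, pairs of dissimilar MDPs carry little joint mass, so the sampled generating MDPs concentrate within distance $\steps C$ of MDPs with high posterior weight. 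Summing the discounted per-stage restriction error across $\finalStage$ stages together with the $\disc^{\steps\finalStage}/(1-\disc)$ tail incurred by truncating the tree at depth $\steps\finalStage$ gives the $\frac{2(\steps C + \disc^\steps)}{1-\disc}$ term.

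For (ii), I would invoke the stage-level lemma referenced in Appendix~A (Lemma~\ref{lem:stage-error}): rolling a single fixed $\steps$-step policy under $\hat{\bel}_h$ rather than $\bel_h$ incurs a Bayes-utility error proportional to $\epsilon_h \steps$. Using $\epsilon_h \leq \epsilon_0/h$ from Assumption~\ref{ass:approximate-belief}, the stage errors accumulate as $\sum_{h \geq 1}\disc^{\steps h}\epsilon_0\steps/h$; since $\sum_{h\geq 1} x^h/h = \ln\frac{1}{1-x}$, setting $x = \disc^{\steps}$ collapses this to $\epsilon_0\steps \ln\frac{1}{1-\disc^\steps}$, doubled to cover symmetric over- and under-estimation. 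For (iii), since each rollout's discounted return lies in $[0, 1/(1-\disc)]$, Hoeffding's inequality on the $\nsamples$-rollout average combined with a union bound over the candidate policies and the leaves of the planning tree yields the $\sqrt{\ln(\nsamples/\delta)/(2\npolicies(1-\disc)^2)}$ concentration term after the appropriate telescoping; the bound is then propagated through the recursion.

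The main obstacle will be step (i): turning the pointwise bound in Assumption~\ref{ass:belief-correlation} into a quantitative coverage guarantee that some $\pi \in \Pi_{\bel_h}$ is near-Bayes-optimal, and carefully reconciling \emph{stationary} $\steps$-step policies (which DSS commits to between re-branching points) with an \emph{adaptive} optimal policy. The accompanying bookkeeping — making the per-stage error stream compose with the discount schedule so that $\steps$, rather than $\steps\finalStage$, sits in front of the harmonic-log factor — is the delicate part of the argument.
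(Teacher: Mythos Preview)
Your three-way decomposition and the assignment of the two assumptions to the first two terms is exactly the paper's strategy: it writes $\linf{\BOutil-\DSutil}\leq\linf{\BOutil-\kBOutil}+\sum_h\disc^{\steps h}\linf{\kBOutilH-\TSutilH}+\linf{\TSutil-\DSutil}$, where $\kBOutil$ is the utility of the Bayes-optimal policy restricted to $\steps$-step stationary pieces. Your treatment of (ii) (the $\sum_h \disc^{\steps h}\epsilon_0/h$ harmonic-log sum via Lemma~\ref{lem:stage-error} and Assumption~\ref{ass:approximate-belief}) and of (iii) (Hoeffding on bounded returns) matches the paper.

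Where you diverge is (i), and this is a genuine gap. You propose a \emph{coverage} argument: use Assumption~\ref{ass:belief-correlation} to show that $\npolicies$ Thompson draws are likely to land near any high-posterior MDP, so some sampled $\pi\in\Pi_{\bel_h}$ is near-optimal. The paper does something much simpler and structurally different. It treats $\TSutil$ as the utility of the \emph{distribution} over policies $\pi^*_{\mdp'}$ with $\mdp'\sim\bel$ (no finite sample yet), and bounds $\kBOutil-\TSutil$ directly. After truncating the value comparison to $\steps$ steps --- which is where the $\disc^{\steps}/(1-\disc)$ piece comes from, not from cutting the tree at depth $\steps\finalStage$ as you wrote --- the gap becomes a double integral $\int\int\big[V_{0,\steps}^{\pi^*_\mdp,\mdp}-V_{0,\steps}^{\pi^*_{\mdp'},\mdp'}\big]\bel(\mdp)\bel(\mdp')\dd\mdp\dd\mdp'$. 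An MDP-perturbation lemma bounds the bracket by $\tfrac{2\steps}{1-\disc}\MDPdist{\mdp}{\mdp'}$, and Assumption~\ref{ass:belief-correlation} is applied \emph{pointwise} as $\MDPdist{\mdp}{\mdp'}\bel(\mdp)\bel(\mdp')\leq C$ to collapse the integral to $\tfrac{2\steps C}{1-\disc}$. No concentration or ball-coverage step appears; all finite-$\npolicies$, finite-$\nsamples$ effects live entirely in the Hoeffding term (iii).

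Your proposed route would be hard to complete: Assumption~\ref{ass:belief-correlation} is an \emph{upper} bound on $\bel(\mdp)\bel(\mdp')$, which does not readily yield the \emph{lower} bound on the mass of $\MDPdist{\cdot}{\cdot}$-balls that a coverage argument needs. The paper's direct-integral approach sidesteps this completely and is what you should adopt for this term.
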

	At the same time, the algorithm is significantly less computationally expensive than basic Sparse sampling~\citep{DBLP:conf/ijcai/KearnsMN99} which would take $O((|A|M)^T)$ calls to the generative BAMDP model, while we require only $O((NM)^{T/K})$ calls for a $T$-horizon problem.
	%Most importantly, the simple sparse sampling algorithm is so inefficient in practise, that it can't be used even for the smallest %of environments.
	%\cdcomment{We are not testing basic sparse sampling, are we? So, I ommit the assertion that it is inefficient in practice}
	\subsection{Proof Overview}
	Let's consider the planning process to be computed till horizon $\horizon$, which is divided in $\finalStage$ episodes each of length $\steps$. Thus, we get $\horizon = \steps \finalStage$.
	Let $\Pi_\steps$ be the set of all policies $\pol_1^\finalStage \defn \{\pol_i\}_{i=1, \ldots, \finalStage}$. Each $\pol_1^\finalStage$ is a concatenation of $\finalStage$, $\steps$-horizon policies. Hereafter, we refer to such policies as $\steps$-step policies. Since planning is divided into episodes, we define the episodic utility in episode $h+1$ as:
	$$\util^\pol_{\bel_h}(s) \defn \int_{\mathcal{M}} V_{0,\steps}^{\pol,\mdp} (s) \bel_h(\mdp) d\mdp.$$
	Here, $\bel_h$ is the belief at start of episode $h$. Similar to the definition of overall utility in Equation~\eqref{eq:bayes-value},  episodic utility of $\pol$ defines the expected utility of taking $\steps$ steps in the BAMDP starting from belief $\bel_h$ .
	Let $\BOpol$ be the Bayes-optimal policy, $\DSpol$ be the DSS policy, $\kBOpol$ the Bayes-optimal adaptive policy that is restricted to $\steps$-step policies, and $\TSpol$ the Thompson sampling policy, with respective utilities $\BOutil, \DSutil, \kBOutil, \TSutil$.

	Now, we write the Bayesian regret of DSS policy relative to the Bayes-optimal policy and decompose it in terms of relative regret of the the aforementioned policies:
	\begin{align}
	\linf{\BOutil - \DSutil}
	\!\!\!\!{=}
	\linf{\BOutil - \kBOutil
		+ \kBOutil - \TSutil
		+  \TSutil - \DSutil}  \notag \\
	\leq  %\sum_{h} \linf{\util^*_{\bel_h} - \util^{\pol_h^{h+1}}_{\bel_h}}
	\linf{\BOutil - \kBOutil}
	\!\!\!\!+ \sum_h   \disc^{\steps h} \linf{\kBOutilH - \TSutilH}
	\!\!\!\!+  \linf{\TSutil - \DSutil}
	%\notag\\
	%  + \disc^{\horizon} / (1 - \disc)
	\label{eq:proof}
	\end{align}
	We bound the first and second term of \eqref{eq:proof} by Lemmas \ref{lemma:anytime} and \ref{lem:ts_regret} below.

	\if \longver 0
	\begin{lemma}[Anytime Error]
		\label{lemma:anytime}
		Under Assumption~\ref{ass:approximate-belief},
		\[
		\linf{\BOutil - \kBOutil} \leq  2\epsilon_0 \steps \ln  \frac{1}{1 - \disc^{\steps}}.
		% \linf{\BOutil - \kBOutil} \leq  2 \epsilon_0 \steps \log(\finalStage)
		\]%\vspace*{-2.5em}
	\end{lemma}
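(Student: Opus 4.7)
The plan is to decompose $\linf{\BOutil - \kBOutil}$ stage by stage, isolating at each episode $h$ the error arising from $\kBOpol$ using the constant belief approximation $\hat{\bel}_h$ instead of the evolving $\bel_h$. First I would unroll the Bellman recursion for $\BOutil$ and $\kBOutil$ one episode at a time. Since both utilities share the same $\steps$-step discount structure, peeling off stage $h$ contributes a term of the form $\disc^{\steps h}$ times the difference of optimal $\steps$-step episodic utilities at that stage, plus a recursive tail that is handled by induction on the remaining horizon $\finalStage - h$.

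Next, at each stage I would bound the per-stage optimal-utility difference using the standard ``two optimisers under a perturbed objective'' argument. Letting $\pol^*_{\bel_h}$ and $\pol^*_{\hat{\bel}_h}$ denote the $\steps$-step Bayes-optimal policies for beliefs $\bel_h$ and $\hat{\bel}_h$ respectively, optimality of each yields
\[
\left|\util^{\pol^*_{\bel_h}}_{\bel_h}(s) - \util^{\pol^*_{\hat{\bel}_h}}_{\hat{\bel}_h}(s)\right| \leq 2 \sup_{\pol} \left|\util^{\pol}_{\bel_h}(s) - \util^{\pol}_{\hat{\bel}_h}(s)\right|.
\]
Lemma~\ref{lem:stage-error}, the episodic Bayes-utility perturbation bound promised in Appendix~A, then controls the right-hand supremum by $\steps \epsilon_h$: by H\"older the integrand $V_{0,\steps}^{\pol,\mdp}(s)$ is bounded by $\steps$ (rewards in $[0,1]$, horizon $\steps$), and the belief perturbation has $L_1$ norm $\epsilon_h$ by definition.

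Finally, summing the discounted per-stage errors and invoking Assumption~\ref{ass:approximate-belief},
\[
\linf{\BOutil - \kBOutil} \;\leq\; \sum_{h=1}^{\infty} \disc^{\steps h}\cdot 2\steps \epsilon_h \;\leq\; 2\steps \epsilon_0 \sum_{h=1}^{\infty} \frac{\disc^{\steps h}}{h} \;=\; 2\steps \epsilon_0 \ln \frac{1}{1-\disc^{\steps}},
\]
where the final equality uses the Taylor series $-\ln(1-x) = \sum_{h\geq 1} x^h/h$ for $x \in (0,1)$ with $x = \disc^{\steps}$. The hard part will be the per-stage decomposition itself: because $\BOpol$ can adapt \emph{within} an episode while $\kBOpol$ commits to a single $\steps$-step policy, one must package each window as the episodic utility $\util^{\pol}_{\bel_h}$ already defined in the proof overview so that Lemma~\ref{lem:stage-error} absorbs the entire intra-episode discrepancy in one shot, preventing any step-by-step belief-drift terms from accumulating. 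Verifying that the recursive tail from one stage cleanly matches the induction hypothesis for the next stage, without double counting the belief error, is the main technical check.
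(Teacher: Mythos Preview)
Your proposal is correct and follows essentially the same route as the paper's own proof: a stage-wise decomposition with discount $\disc^{\steps h}$, the two-optimisers inequality (the paper states this as a separate trivial Lemma~\ref{lem:eps-optimality}), the per-episode bound $\Delta_h \leq \steps\epsilon_h$ from Lemma~\ref{lem:stage-error}, Assumption~\ref{ass:approximate-belief} to replace $\epsilon_h$ by $\epsilon_0/h$, and finally the Taylor identity $\sum_{h\geq 1} x^h/h = -\ln(1-x)$. The paper also opens with the observation you flag as the crux---that the $\steps$-step-restricted Bayes-optimal policy coincides with the optimal policy for the \emph{frozen} belief $\hat\bel_h$, which is exactly what licenses invoking Lemma~\ref{lem:stage-error}; your caution about the per-stage decomposition and the intra-episode adaptivity of $\BOpol$ is well placed, as the paper asserts the first-line equality $\linf{\BOutil-\kBOutil}=\sum_h\disc^{\steps h}(\BOutilH-\kBOutilH)$ without further justification.
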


	\begin{lemma}[Error of Thompson-sampling-distributed Policy]
		\label{lem:ts_regret}
		For any episode belief $\bel$, under Assumption~\ref{ass:belief-correlation}:
		\[
		\linf{\kBOutil - \TSutil} \leq \frac{2(\steps C + \disc^\steps)}{(1 - \disc)}.
		\]
	\end{lemma}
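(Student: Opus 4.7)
The plan is to bound $\linf{\kBOutil - \TSutil}$ by introducing a ``genie'' baseline in which, for each sampled MDP, we get to deploy its own optimal policy; this reduces the gap to a pure ``belief discrepancy'' integral that can be controlled by Assumption~\ref{ass:belief-correlation}. Define the genie value $G(s) \defn \int_\MDPs \bel(\mdp) V^{*,\mdp}_{0,\steps}(s) \dd \mdp$. Since $V^{*,\mdp}_{0,\steps}(s) \geq V^{\pi,\mdp}_{0,\steps}(s)$ for every policy $\pi$, pulling the max inside the integral gives $G(s) \geq \kBOutil(s)$. Recalling that $\TSutil(s) = \int \bel(\mdp) \bigl(\int \bel(\mdp') V^{\pi^*_\mdp, \mdp'}_{0,\steps}(s) \dd \mdp'\bigr) \dd\mdp$, Fubini yields
\begin{align*}
\kBOutil(s) - \TSutil(s) \leq \int\!\!\int \bel(\mdp)\bel(\mdp') \bigl[V^{\pi^*_\mdp,\mdp}_{0,\steps}(s) - V^{\pi^*_\mdp,\mdp'}_{0,\steps}(s)\bigr] \dd\mdp \dd\mdp'.
\end{align*}
So only this expected value discrepancy between the ``true'' MDP $\mdp$ (under which $\pi^*_\mdp$ is optimal) and a resampled $\mdp'$ needs to be controlled.

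Next I would establish a K-step simulation lemma for the integrand. Expanding $V^{\pi,\mdp}_{0,\steps}(s) - V^{\pi,\mdp'}_{0,\steps}(s) = \sum_{k=1}^{\steps} \disc^k (\E^\pi_\mdp r_k - \E^\pi_{\mdp'} r_k)$ and bounding the $k$-step occupancy TV by $k \MDPdist{\mdp}{\mdp'}$ (by a standard induction), I obtain the Lipschitz bound $\bigl|V^{\pi,\mdp}_{0,\steps}(s) - V^{\pi,\mdp'}_{0,\steps}(s)\bigr| \leq \steps \MDPdist{\mdp}{\mdp'}/(1-\disc)$; in parallel the trivial bound from rewards in $[0,1]$ gives $\bigl|V^{\pi,\mdp}_{0,\steps}(s) - V^{\pi,\mdp'}_{0,\steps}(s)\bigr| \leq (1-\disc^\steps)/(1-\disc)$.

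Finally I would partition the domain using a threshold $\epsilon$, writing $\ball_\epsilon \defn \{(\mdp,\mdp') : \MDPdist{\mdp}{\mdp'} \leq \epsilon\}$. On $\ball_\epsilon$ the Lipschitz bound together with $\int\!\!\int \bel\bel' \leq 1$ yields a contribution of at most $\steps \epsilon/(1-\disc)$. On the complement I invoke Assumption~\ref{ass:belief-correlation}: pointwise $\bel(\mdp)\bel(\mdp') \leq C/\MDPdist{\mdp}{\mdp'}$, so that the ``concentrated-on-close-MDPs'' nature of $\bel$ forces the mass on far pairs to be small, and combining this with the value ceiling $(1-\disc^\steps)/(1-\disc)$ gives a term that is controlled by $C/\epsilon$ times the tail. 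Balancing the two contributions (choosing $\epsilon$ so that $\steps \epsilon$ and $C$ are comparable) delivers the stated bound $2(\steps C + \disc^\steps)/(1-\disc)$, with the factor $2$ absorbing the two regimes.

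The main obstacle will be the far-region estimate: Assumption~\ref{ass:belief-correlation} is pointwise and does not a priori bound a volume integral. The trick is to combine it with the trivial ceiling on $V^{\pi,\mdp}_{0,\steps}$ rather than the Lipschitz bound, and to argue that once $\MDPdist{\mdp}{\mdp'}$ exceeds the threshold, the product $\bel(\mdp)\bel(\mdp') \cdot |V^{\pi^*_\mdp,\mdp}_{0,\steps}(s) - V^{\pi^*_\mdp,\mdp'}_{0,\steps}(s)|$ is small because $\bel\bel' \MDPdist{\mdp}{\mdp'} \leq C$ lets us trade one factor of $\MDPdist{\mdp}{\mdp'}$ in the denominator against the Lipschitz numerator; the residual $\disc^\steps$ in the bound then reflects the gap between the two regimes at the optimal threshold.
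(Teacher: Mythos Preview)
Your genie upper bound and the reduction to the double integral
\[
\int\!\!\int \bel(\mdp)\bel(\mdp')\bigl[V^{\pi^*_\mdp,\mdp}_{0,\steps}(s)-V^{\pi^*_\mdp,\mdp'}_{0,\steps}(s)\bigr]\,d\mdp\,d\mdp'
\]
together with the $\steps$-step simulation lemma are exactly the backbone of the paper's argument. Where you diverge is in how you close out the integral and where the $\disc^{\steps}$ term comes from.

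The paper does \emph{not} perform a near/far split. After the simulation lemma it has an integrand proportional to $\steps\,\MDPdist{\mdp}{\mdp'}\,\bel(\mdp)\bel(\mdp')/(1-\disc)$, and it invokes Assumption~\ref{ass:belief-correlation} in one shot: the pointwise inequality $\MDPdist{\mdp}{\mdp'}\bel(\mdp)\bel(\mdp')\leq C$ cancels the $\MDPdist{\mdp}{\mdp'}$ coming from the simulation lemma, giving the $\frac{2\steps C}{1-\disc}$ term directly. So the role of Assumption~\ref{ass:belief-correlation} is not to bound the mass of a ``far'' set but to absorb the distance factor produced by the simulation lemma; your threshold $\epsilon$ and the attempted balancing are unnecessary, and your own diagnosis that the far-region estimate does not go through from the pointwise assumption is correct---that route does not close.

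The $\disc^{\steps}$ contribution has a completely different origin in the paper. Rather than working with $\kBOutil$ throughout, the paper first passes to the full infinite-horizon Bayes-optimal utility via $\kBOutil\leq\BOutil$, and then truncates back to a $\steps$-step window, which costs $\frac{2\disc^{\steps}}{1-\disc}$. That truncation residual is what appears in the final bound; it is not the shadow of a near/far threshold. Once you drop the split and add this truncation step, your argument coincides with the paper's.
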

	\begin{proof}[Theorem~\ref{thm:pacbamdp} (sketch)]
		Merging the errors due to Anytime error and Thompson-sampling-distributed error from Lemmas \eqref{lemma:anytime} and \eqref{lem:ts_regret}, we obtain $\util^{\TSpol}_\bel(s) \geq \util^*_{\beta}(s) - \left(2\epsilon_0 \steps \ln  \frac{1}{1 - \disc^{\steps}} + \frac{2(\steps C + \disc^\steps)}{(1 - \disc)}\right)$ for all $s$.
		Combined with an additional Hoeffding inequality for last term of eq.\eqref{eq:proof} we obtain Theorem \eqref{thm:pacbamdp}.
	\end{proof}
	\fi
	\section{EXPERIMENTAL ANALYSIS}\label{sec:experiment}
	% \cdcomment{Did you end up not doing any lookahead at all? Then you are simply finding the Bayes-optimal stationary policy in each step. How often are you calculating a new policy to act with? At every step?} Ans: No, not always. Yes, I calculate a new policy to act with at every step, take action in real environment, then redo the computation. Same as other algorithms. we can add the table of optimal parameters here or in appendix.

	\paragraph{Experimental protocol.}
	We empirically evaluate performance of DSS in comparison with three different algorithms on four different environments. We give additional plots in Appendix D, using Python API of our code\footnote{https://github.com/revorg7/DeepSparseSampling}.\\
	Each algorithm has a number of hyperparameters to choose. Some of which, such as the prior belief, are common to all algorithms. The remainder are unique to each algorithm which are tuned in the following manner:\\
	For each environment $\mdp$ and algorithm $\pol$ combination, we evaluate the algorithm's hyperparameter $\lambda$ over $10$ experiments with horizon $T$ and select the value maximising average cumulative reward over them, i.e. $\lambda^*(\mdp, \pol) = \argmax_\lambda \sum_{i=1}^{10} \sum_{t=1}^T r_t^{(i)}$, where $r_{t}^{(i)}$ is the reward sequence of the $i$-th experiment. The parameter sets for each algorithm are detailed in Appendix C.
	The final evaluation, and results shown, was performed over $100$ runs using the chosen $\lambda^*$. This is done to avoid selection of the best parameter in hindsight.
	% More on cross-validation in Appendix C.

	\paragraph{Algorithms.} In our experiments, we consider four lookahead algorithms, all of which expand the BAMDP to a finite horizon.\footnote{Myopic algorithms like Thompson sampling were not excluded. In particular, TS  is a special case of \textit{SBOSS}, but in our hyperparameter search it was always automatically excluded.}

	% \begin{enumerate}[leftmargin=*]
	\textit{Sparser}: Alg.~\ref{alg:dss} with two variants of policy generators: PI and RTDP. PI refers to exact discounted Policy-iteration while RTDP refers \cite{barto1995learning}, where the RTDP horizon can intuitively be taken as $\steps$ as we run the generated policy for next $\steps$-steps in the belief tree.

	\textit{BAMCP}\footnote{\label{foot:implementation}We use the implementations from BAMCP paper.}: The current state-of-the-art. It applies UCT algorithm in belief tree, combined with root-sampling and lazy sampling for faster computation. \citep{guez2012efficient}

	\textit{SBOSS}\textsuperscript{\ref{foot:implementation}}: A more effective variant of BOSS algorithm. BOSS algorithm samples multiple MDPs from the belief, creates an extended MDP using the samples, then solving it to yeild an optimistic policy.~\citep{castro2010smarter}

	\textit{BFS3}\textsuperscript{\ref{foot:implementation}}: An optimistic follow-up to \cite{wang:bayesian-sparse-sampling:icml:2005}, it performs optimistic action selection in belief tree planning. It main advantage lies in non-uniform trajectory selection.~\citep{asmuth2011approaching}
	%\end{enumerate}

	\paragraph{Environments.} We evaluate on the following environments:
	\begin{enumerate}[leftmargin=*]
		\item \textit{Chain}: An MDP consisting of 5 states, connected in a linear chain, with a big reward opposite to the start state at one corner \citep{dearden98bayesian}.\footnote{Note that Chain was not compared in the BAMCP paper. For all other environments, we used a configuration identical to experiments in \cite{guez2012efficient}.}
		\item \textit{DoubleLoop}: A 9-state MDP consisting of two seperate loops, sharing one state in common~\citep{dearden98bayesian}.
		\item \textit{Grid}: Two sparse-reward environments, represented by square grids, of size 5x5 (Grid5) and 10x10 (Grid10), with reward only at goal state. Initial state is always diagonally opposite to the goal state.
		\item \textit{Maze}: A grid world with 264 states, consisting of flags to be collected a various locations, which inturn decide the reward value when goal state is finally reached. The states are encoded by location of agent, as well as flag status~\citep{dearden98bayesian}.
	\end{enumerate}
	\begin{figure*}
		\centering
		\begin{subfigure}[b]{0.32\textwidth}
			\includegraphics[width=\columnwidth]{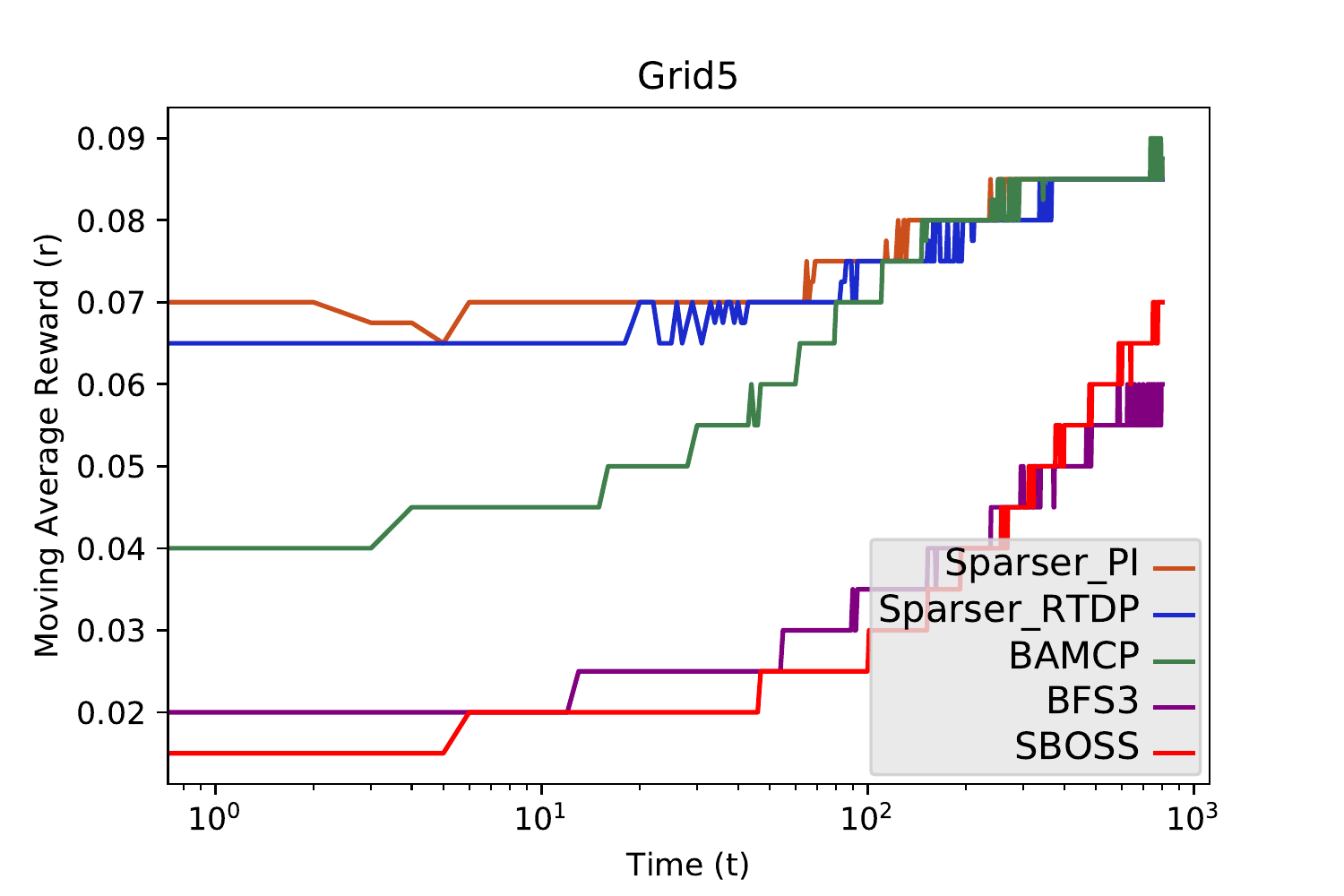}
			% \caption{A gull}
			\label{fig:grid5}
		\end{subfigure}
		~ %add desired spacing between images, e. g. ~, \quad, \qquad, \hfill etc.
		% (or a blank line to force the subfigure onto a new line)
		\begin{subfigure}[b]{0.32\textwidth}
			\includegraphics[width=\columnwidth]{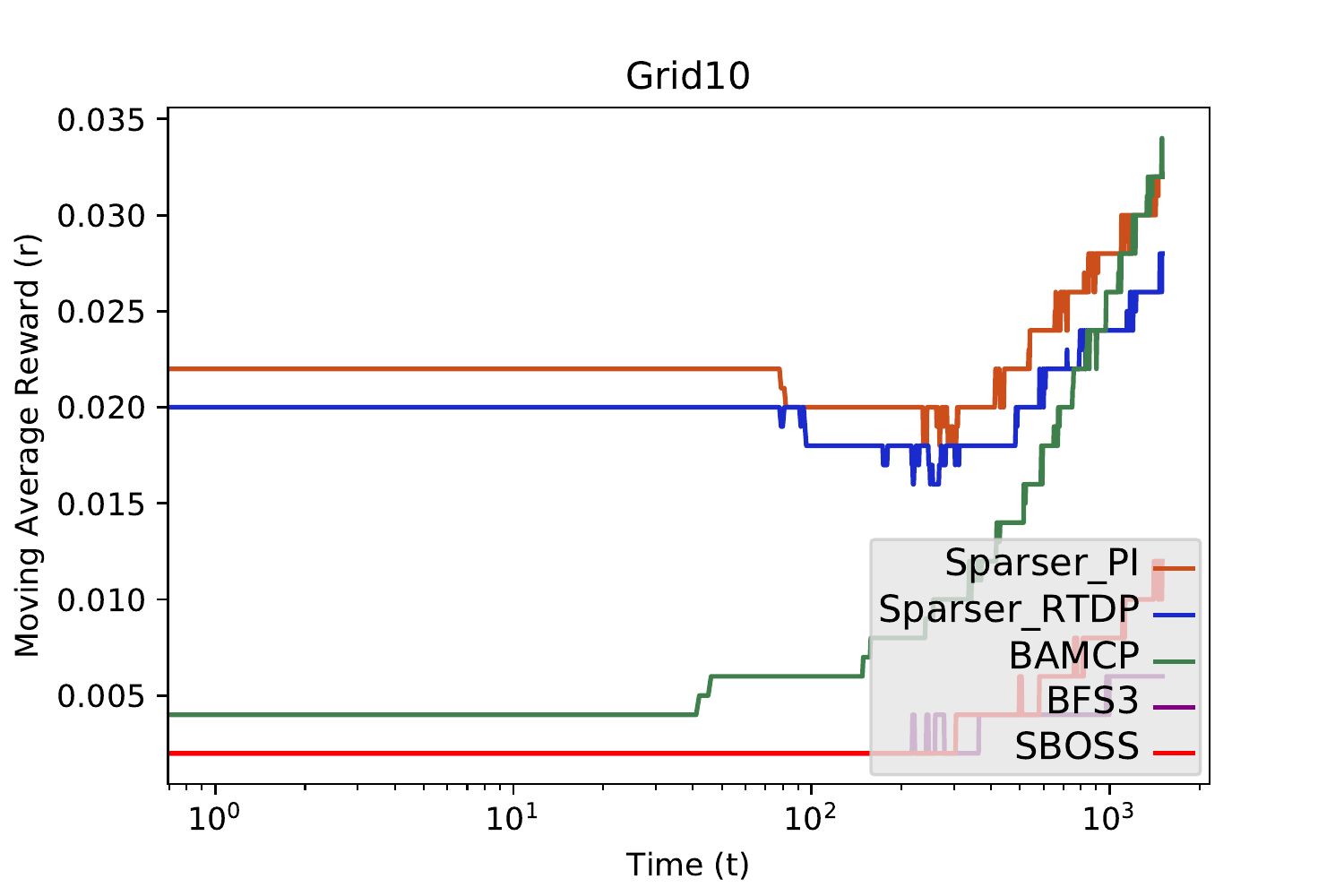}
			% \caption{A tiger}
			\label{fig:grid10}
		\end{subfigure}
		~ %add desired spacing between images, e. g. ~, \quad, \qquad, \hfill etc.
		% (or a blank line to force the subfigure onto a new line)
		\begin{subfigure}[b]{0.32\textwidth}
			\includegraphics[width=\columnwidth]{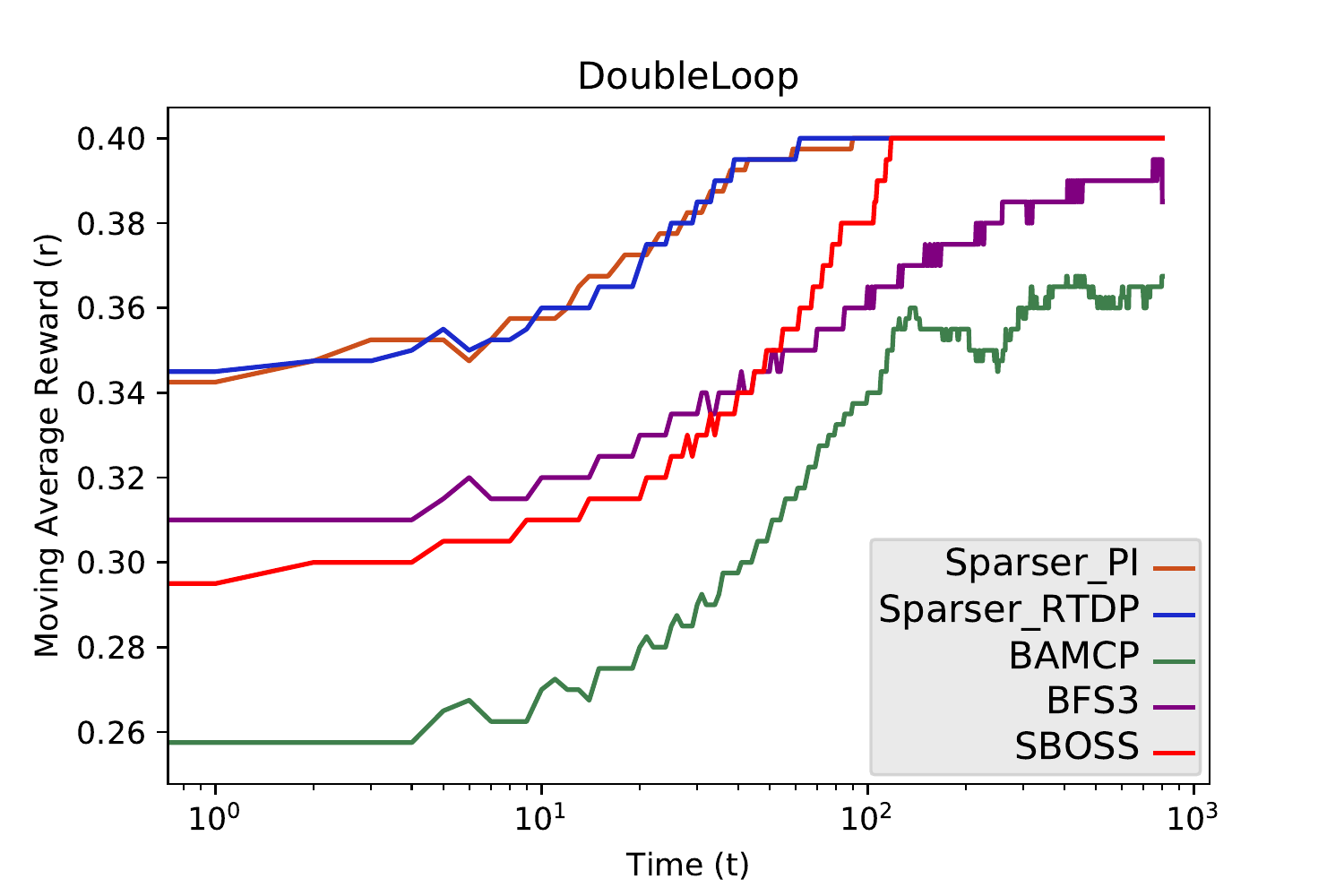}
			% \caption{A mouse}
			\label{fig:loop}
		\end{subfigure}\\
		\begin{subfigure}[b]{0.32\textwidth}
			\includegraphics[width=\columnwidth]{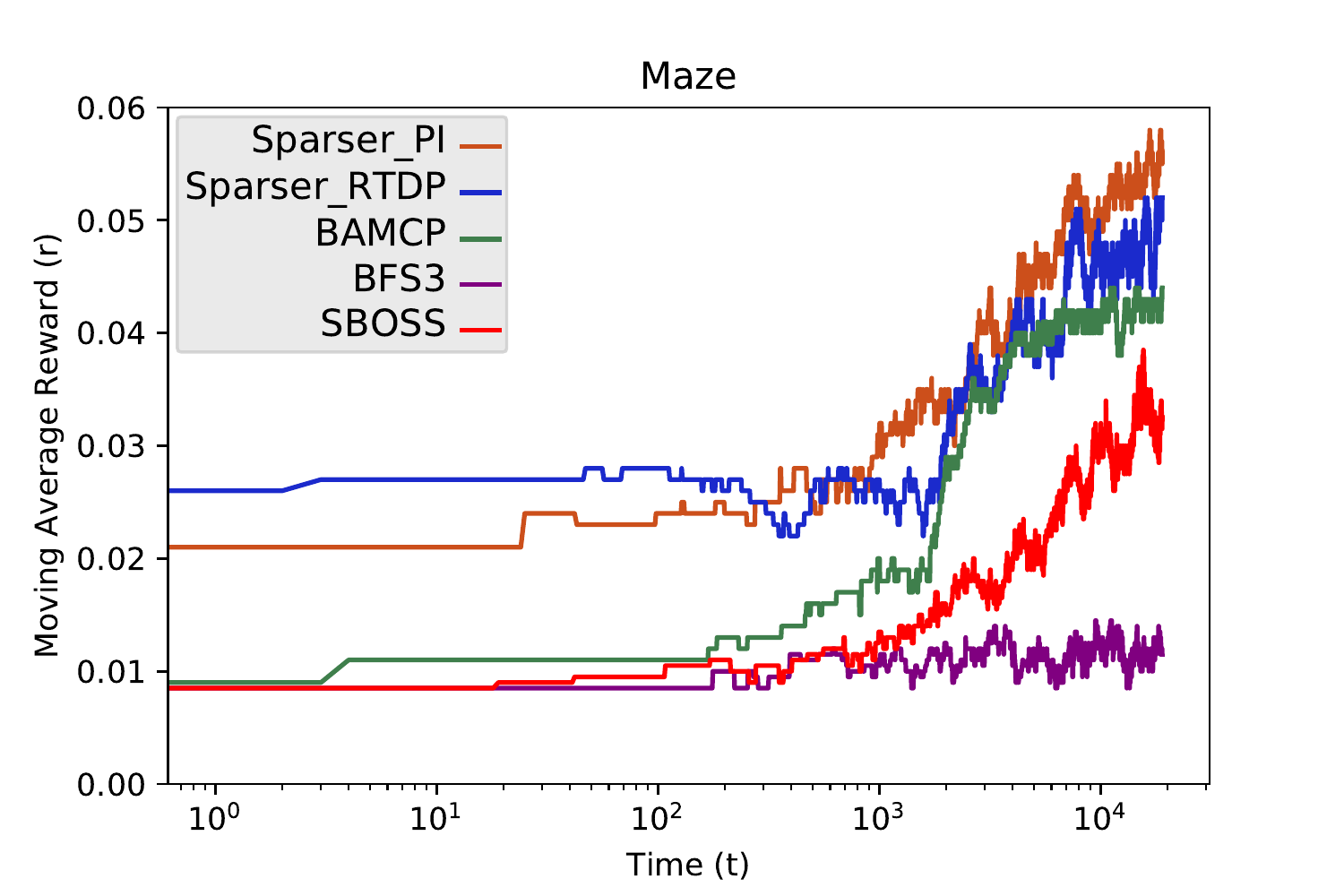}
			% \caption{A mouse}
			\label{fig:maze}
		\end{subfigure}
		\begin{subfigure}[b]{0.32\textwidth}
			\includegraphics[width=\columnwidth]{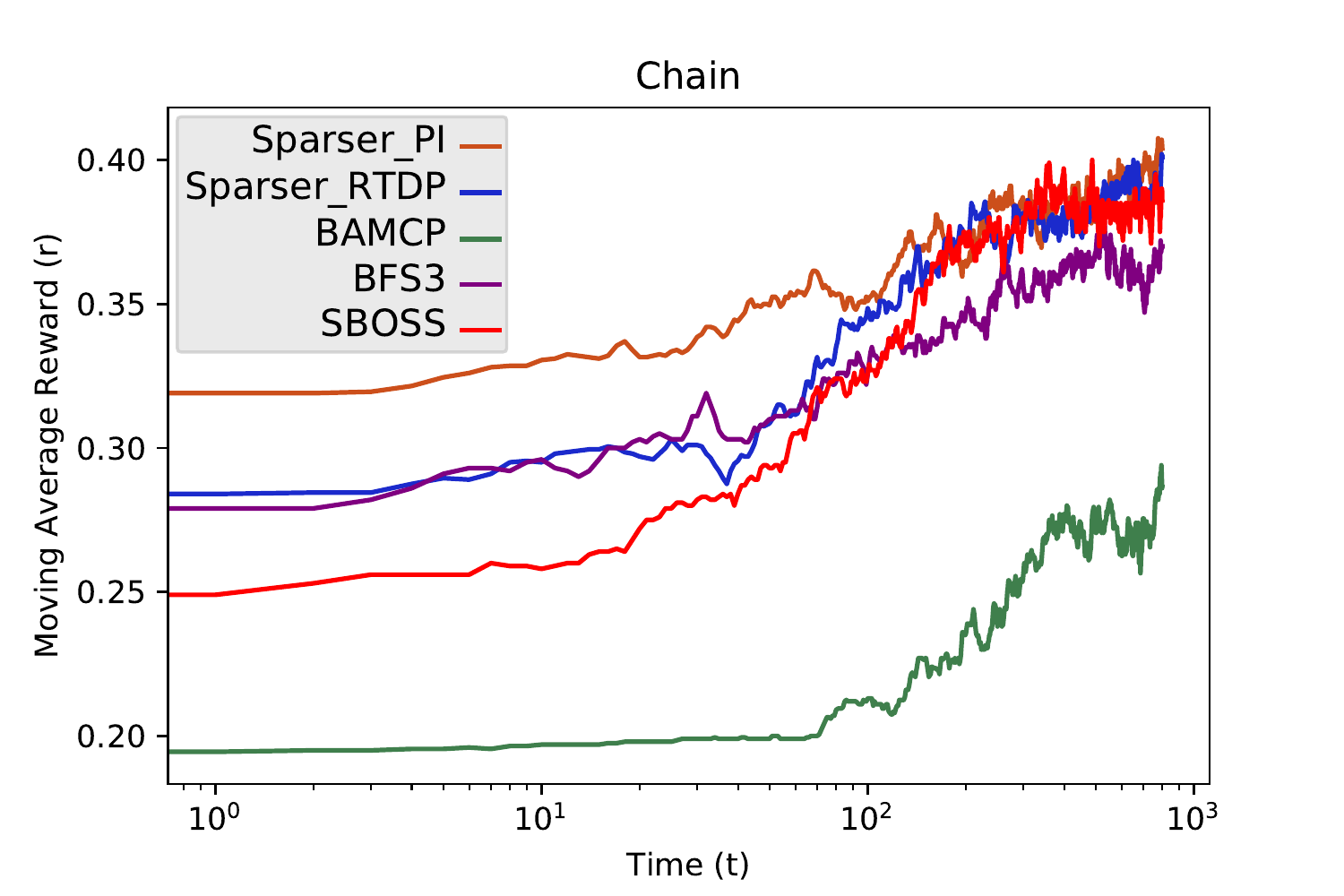}
			% \caption{A mouse}
			\label{fig:chain}
		\end{subfigure}%\vspace*{-1.5em}
		\caption{Moving average performance on log time scale}\label{performance}
	\end{figure*}

	\paragraph{Shared parameters.}
	Some parameters are shared by all algorithms. When possible, we reuse the ones used in~\citep{guez2012efficient}:
	\begin{itemize}
		\item We impose a limit 0.25sec/step for Chain and DoubleLoop, 1.5sec/step for Maze and 1sec/step for the grid environments. Hyperparameter values exceeding those limits were excluded from the hyperparameter search.
		\item We assume known rewards. We recompute the optimal action at each step in simulation.
		\item Experiments last for $T=1000$ steps in Chain, DoubleLoop and Grid5, $T=2000$ in Grid10 steps and in $T=20000$ in Maze.
		\item We use a hierarchical Dirichlet~\citep{friedman1999efficient} on the transition probabilities.
		\item We use the environment simulators from~\citep{guez2012efficient}.\footnote{\label{note1}Code: https://github.com/acguez/bamcp}
	\end{itemize}

	\begin{table*}[!ht]
		\vskip 0.15in
		\begin{center}
			\begin{small}
				\begin{sc}
					\begin{tabular}{lccccr}
						\toprule
						Algorithm & Chain & DoubleLoop & Grid5 & Grid10 & Maze \\
						\midrule
						Sparser-RTDP    & 358.97$\pm$5.15 & \textbf{387.20$\pm$0.64}  & 78.74$\pm$0.80 & 44.32$\pm$0.89 & 849.99$\pm$20.68\\
						Sparser-PI    & \textbf{370.06$\pm$4.71} & 380.60$\pm$0.62  & \textbf{79.01$\pm$0.47} & \textbf{50.91$\pm$0.50} & \textbf{944.99$\pm$19.36}\\
						BAMCP & 267.63$\pm$5.72 & 309.32$\pm$4.26 & 73.92$\pm$0.96 & 37.07$\pm$0.72 & 738.2$\pm$21.96\\
						BFS3    & 340.57$\pm$4.51 & 367.95$\pm$0.74 & 44.94$\pm$0.88 & 8.60$\pm$0.28 & 225$\pm$6.88\\
						SBOSS    & 351.49$\pm$4.28 & 371.11$\pm$1.77  & 47.50$\pm$0.36 & 13.32$\pm$0.35 & 513.25$\pm$5.59\\
						\bottomrule
					\end{tabular}
				\end{sc}
			\end{small}
		\end{center}
		\vskip -0.1in
		\caption{Total reward obtained, averaged over 100 experiments, shown with standard errror.}
		\label{tab:total-reward}
	\end{table*}

	\subsection{Analysis of Results}
	We measure three quantities over 100 trials for each environment:  the mean total reward (Table~\ref{tab:total-reward}), the per-step average reward (Figure~\ref{performance}), and  the CPU time (Table~\ref{TimeTaken}). The CPU time denotes the time taken per episode for the best performing parameters.
	% Quantile plot in Figure~\ref{performance} is omitted for clarity and not required, as we are interested only in the dispersion of cumulative reward at the end of episode.

	Table~\ref{tab:total-reward} shows the average cumulative reward, a comparison metric used in previous works, for each of the algorithms on each of the environments. The standard error incurred by both DSS variants is small enough. This implies that both DSS variants outperform the current state of the art for all environments tested; following a rigorous and unbiased hyperparameter selection process.

	Figure~\ref{performance}, which shows the time evolution of the average reward. For clear illustration, the average reward is smoothed over a window of 200 steps (500 steps for the Maze). DSS outperforms all other algorithms initially, due to better exploratory actions. In most cases there exists at least one (different) algorithm that achieves the asymptotic performance of DSS. This phenomenon is expected since beliefs of all competing algorithms converge to the true model, but they are generally unable to converge as fast as DSS for all the environments.

	The advantage of DSS is not only in terms of performance but also in terms of efficiency. Table~\ref{TimeTaken} shows that DSS takes significantly less time per episode for larger environments than its immediate predecessor, BFS3, and also often manages to outperform the state-of-the-art BAMCP in terms of computation time.

	It is important to note that, although we impose a per-step time limit on computation, the performance of the tested algorithms does not necessarily increase with computation time. For example, we observe that performance of BAMCP actually drops when number of root samples are increased from $10^5$ to $10^6$ while keeping other parameters constant. This reinforces the need for using an unbiased experimental methodology for tuning hyperparameters, as advocated in this paper. Similar observations were made in~\citep{guez2012efficient} regarding SBOSS and BFS3. For DSS, in practice, the performance generally increases with parameters $\npolicies$ and $\nsamples$ but plateaus quite fast. For further reference, the chosen hyperparameters are shown in Table~\ref{optimalParams} (Appendix C).

	\begin{table*}[t!]
		\vskip 0.15in
		\begin{center}
			\begin{small}
				\begin{sc}
					\begin{tabular}{lccccr}
						\toprule
						Algorithm & Chain & DoubleLoop & Grid5 & Grid10 & Maze \\
						\midrule
						Sparser-RTDP  & 0.93 & 1.31  & 5.32 & 97.4 & 1267.2\\
						Sparser-PI    & 2.72 & 1.70  & 5.73 & 142.2 & 1532.0\\
						BAMCP & 0.56 & 1.25 & 172.46 & 315.7 & 1789.4\\
						BFS3    & 6.25 & 2.26 & 54.03 & $>$2000* & 3558.7\\
						SBOSS    & 0.01 & 0.01  & 0.28 & 300.95 & 3695.5\\
						\bottomrule
					\end{tabular}
				\end{sc}
			\end{small}
		\end{center}
		\caption{Time taken in seconds per episode. (*Time limit exceeded)}
		\label{TimeTaken}
	\end{table*}

\section{DISCUSSION AND FUTURE WORK}
% \cdcomment{Not a very nice way to say it.}
%We address the problem of efficient model-based online planning for Bayesian reinforcement learning. We propose an optimism-free algorithm to induce deeper and sparser exploration, with a PAC planning process, and which can achieve state-of-the-art results with lower computational complexity. The main novelty is the use of a candidate policy generator, to generate long-term options in the belief tree, which allows us to create much sparser and deeper trees.

We propose an optimism-free algorithm that induces deeper and sparser exploration, with a PAC planning process, and also achieves state-of-the-art results with lower computational complexity. The PAC guarantee provides DSS with theoretical strength relative to other state-of-the-art algorithms (c.f. Table 4.1 in~\cite{ghavamzadeh2015bayesian}). The analysis also shows how the gap between the Bayes-optimal policy and DSS depends on the main hyperparameter $\steps$.

In comparison, BAMCP is Bayes-optimal policy only asymptotically~\citep{guez2012efficient}. The guarantees for BOSS are PAC-MDP (i.e. that there is only a polynomial number of steps for which its takes an action with unbounded utility error), However,  \citet{bolt} argue that PAC-MDP is not the most suitable for evaluating BAMDP algorithms. Finally, the theoretical properties of BFS3, which can be regarded as the immediate predecessor to DSS, are not known.

Experimental results on different environments show that, compared to the state-of-the-art, our algorithm is both more efficient and obtains higher reward. In practice, we drastically reduce the computation time compared to its immediate Forward Search predecessor BFS3, as can be seen in Table~\ref{TimeTaken}. This is because we only compute policies every $\steps$-step while planning in belief tree. And unlike BFS3, instead of maintaining upper and lower bounds on observation nodes, we simply select them by sampling from the current posterior in the tree branch. %In addition, our algorithm is competitive with the state of the art and it generally achieves significantly higher performance.

Future extensions to this work can be to provide tighter bounds for Thompson policies, similar to very recent work by~\citep{efroni2019multi}; reinforcing this approach of planning at policy level instead of individual action level. DSS could also possibly be extended to continuous state spaces by keeping a prior over models other than discrete MDPs, such as linear state-space model or a non-linear Neural Network model. However, this would require us to strike a delicate balance between approximations in inference and planning, and is left as a subject for future work.

% \newpage

\bibliographystyle{plainnat}
\bibliography{references}
\clearpage
\appendix

\begin{center}
  \Large{Bayesian Reinforcement Learning via Deep, Sparse Sampling}
  \\
  \large{Supplementary material}
\end{center}

\if \longver 0
\section{Proofs of Section~\ref{sec:theory}}

The following Lemma shows how episodic error is directly proportional to the belief-error:
\begin{lemma}[Per-episode Error]
  \label{lem:stage-error}
  The error in reward under the true and approximate belief in any episode is bounded by
  \begin{align*}
    \Delta_h \defn
    \sup_\pol
    \linf{\util^\pol_{\bel_h}
    -
    \util^\pol_{\hat{\bel}_h}}
    \leq
    \frac{1 - \disc^{\steps}}{1 - \disc}
    \epsilon_h
    \leq
    \steps \epsilon_h.
  \end{align*}
\end{lemma}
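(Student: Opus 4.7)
The plan is to reduce the supremum bound to a Hölder-type inequality against the belief perturbation, and then control the range of the inner integrand (the $\steps$-step MDP value function) by a geometric sum.

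First, I would unpack the episodic Bayesian utility according to its definition: for any policy $\pol$ and state $s$,
\begin{align*}
\util^\pol_{\bel_h}(s) - \util^\pol_{\hat\bel_h}(s)
= \int_{\mathcal{M}} V_{0,\steps}^{\pol,\mdp}(s)\,\bigl(\bel_h(\mdp) - \hat\bel_h(\mdp)\bigr)\dd\mdp.
\end{align*}
Taking absolute values and applying Hölder's inequality (treating $\bel_h - \hat\bel_h$ as a signed measure in $L^1$ and $V_{0,\steps}^{\pol,\cdot}(s)$ as a bounded measurable function on $\mathcal{M}$),
\begin{align*}
\bigl|\util^\pol_{\bel_h}(s) - \util^\pol_{\hat\bel_h}(s)\bigr|
\leq \sup_{\mdp \in \mathcal{M}} \bigl|V_{0,\steps}^{\pol,\mdp}(s)\bigr| \cdot \pnorm{\bel_h - \hat\bel_h}{1}
= \sup_{\mdp \in \mathcal{M}} \bigl|V_{0,\steps}^{\pol,\mdp}(s)\bigr| \cdot \epsilon_h.
\end{align*}

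Next I would bound the $\steps$-step discounted value function. Because rewards lie in $[0,1]$, the definition \eqref{eqn:val} gives
\begin{align*}
0 \leq V_{0,\steps}^{\pol,\mdp}(s)
= \expect^\pol_\mdp \Bigl(\sum_{k=1}^{\steps} \disc^k r_k \,\big|\, s_0=s\Bigr)
\leq \sum_{k=1}^{\steps} \disc^k
\leq \frac{1 - \disc^{\steps}}{1 - \disc},
\end{align*}
uniformly in $\mdp$, $\pol$, and $s$. Substituting back and taking a supremum over $\pol$ and $s$ yields the first claimed inequality
\begin{align*}
\Delta_h \leq \frac{1 - \disc^{\steps}}{1 - \disc}\,\epsilon_h.
\end{align*}
For the second, I would use the telescoping identity $1 - \disc^{\steps} = (1 - \disc)\sum_{i=0}^{\steps-1}\disc^i$, so $\frac{1 - \disc^{\steps}}{1 - \disc} = \sum_{i=0}^{\steps-1}\disc^i \leq \steps$ since $\disc \in (0,1]$.

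No step here is really hard. The only subtle point is the Hölder step when $\mathcal{M}$ is a general measurable space rather than a finite set; one should verify that $V_{0,\steps}^{\pol,\cdot}(s)$ is measurable in $\mdp$ and essentially bounded, which follows directly from the geometric-sum bound above. A slightly sharper constant could be obtained by centering (subtracting any constant from $V_{0,\steps}^{\pol,\cdot}(s)$ before applying Hölder, since $\bel_h - \hat\bel_h$ integrates to zero), but this is not needed for the stated bound.
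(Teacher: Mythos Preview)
Your proof is correct. The paper's own proof is a one-line citation: it observes that the two beliefs induce two BAMDPs whose transition and reward laws are $\epsilon_h$-close in $L^1$ and then invokes Theorem~1 of \citet{dimitrakakis2011robust} (a simulation-type lemma bounding value differences between nearby MDPs) to obtain the $\frac{1-\disc^{\steps}}{1-\disc}\epsilon_h$ bound. Your route is more elementary and self-contained: because the episodic Bayesian utility is \emph{literally} the integral $\int V_{0,\steps}^{\pol,\mdp}(s)\,\bel(\mdp)\dd\mdp$, the difference under two beliefs is controlled directly by the $L^\infty$--$L^1$ H\"older pairing, with no need to pass through the BAMDP construction or an external approximate-MDP lemma. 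Both yield the same constant; your argument is simply shorter and avoids the extra machinery, while the paper's citation would become necessary only if the policy were adaptive within the episode (so that the integral representation no longer factored cleanly).
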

\begin{proof}
  Due to the fact that the two BAMDPs induced by the beliefs are $\epsilon_k$ close in L1 norm, we can use the argument in Theorem 1 of \cite{dimitrakakis2011robust}.
\end{proof}

We shall also use a trivial lemma from Analysis:
\begin{lemma}
  If $\linf{f - g} \leq \epsilon$ and $f(x^*) \geq f(x)$, $g(y^*) \geq g(y)$ then
  $f(y^* ) \geq f(x^*) - 2 \epsilon$.
  \label{lem:eps-optimality}
\end{lemma}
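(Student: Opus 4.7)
The plan is to chain three inequalities, using the uniform closeness $\linf{f-g}\leq \epsilon$ at the two points $x^*$ and $y^*$, with the defining optimality of $y^*$ for $g$ slotted in the middle. This is the standard $2\epsilon$-optimality argument one makes whenever one optimizes a proxy that is uniformly close to the true objective.

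Concretely, I would first unfold the $\infty$-norm hypothesis as the pointwise bound $|f(z)-g(z)|\leq \epsilon$ for every $z$ in the common domain. Applied at $z=y^*$ this gives $f(y^*)\geq g(y^*)-\epsilon$, and applied at $z=x^*$ it gives $g(x^*)\geq f(x^*)-\epsilon$. Next I would invoke the hypothesis that $y^*$ maximizes $g$, which supplies the middle link $g(y^*)\geq g(x^*)$. Composing the three steps,
\[
f(y^*) \;\geq\; g(y^*)-\epsilon \;\geq\; g(x^*)-\epsilon \;\geq\; f(x^*)-2\epsilon,
\]
which is exactly the stated conclusion.

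There is no real obstacle: the lemma is essentially a two-line exercise in bookkeeping. The only point worth flagging is where the factor $2$ comes from, namely that the uniform bound is consumed twice, once at each optimum. One may also observe that the hypothesis $f(x^*)\geq f(x)$ is never used in the chain itself; it only serves to give the conclusion its intended meaning, namely that evaluating the true objective $f$ at the maximizer $y^*$ of its $\epsilon$-approximation $g$ costs at most $2\epsilon$ relative to the true optimum $f(x^*)$.
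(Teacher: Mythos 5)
Your proof is correct: the chain $f(y^*)\geq g(y^*)-\epsilon\geq g(x^*)-\epsilon\geq f(x^*)-2\epsilon$ is exactly the standard argument, and your observation that the hypothesis $f(x^*)\geq f(x)$ is never actually consumed is accurate. The paper states this lemma without proof (calling it a trivial lemma from Analysis), so your two-line argument is precisely the omitted justification and there is nothing to compare against.
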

We are now ready to prove the main results:

\addtocounter{lemma}{-4}
\if \longver 0
\begin{lemma}[Anytime Error]
  %\label{lemma:anytime}
  Under Assumption~\ref{ass:approximate-belief},
  \[
  \linf{\BOutil - \kBOutil} \leq  2\epsilon_0 \steps \ln  \frac{1}{1 - \disc^{\steps}}.
  % \linf{\BOutil - \kBOutil} \leq  2 \epsilon_0 \steps \log(\finalStage)
  \]%\vspace*{-2.5em}
\end{lemma}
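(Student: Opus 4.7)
The plan is to decompose the gap $\linf{\BOutil - \kBOutil}$ across the $H$ stages of length $\steps$ and bound each stage's contribution using Lemma~\ref{lem:stage-error} together with Assumption~\ref{ass:approximate-belief}. Informally, $\BOpol$ is allowed to re-plan at every step, while $\kBOpol$ is allowed to re-plan only at the boundary of each $\steps$-step stage. Within stage $h$, Assumption~\ref{ass:approximate-belief} says the true belief $\bel_h$ differs from its stage-start approximation $\hat\bel_h$ by at most $\epsilon_h \le \epsilon_0/h$, so any policy's episodic utility under $\bel_h$ versus $\hat\bel_h$ is controlled by Lemma~\ref{lem:stage-error}.

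First I would do a telescoping argument across stages. At the start of stage $h$, freeze the belief at $\hat\bel_h$; the Bayes-optimal policy for the true, evolving belief is a candidate adaptive policy, while the $\steps$-step restricted optimum on $\hat\bel_h$ is by construction itself a $\steps$-step policy. Applying Lemma~\ref{lem:stage-error} twice — once to compare $\BOpol$ under $\bel_h$ with $\BOpol$ under $\hat\bel_h$, and once analogously for $\kBOpol$ — the per-stage gap is bounded by $2\Delta_h$, where I invoke Lemma~\ref{lem:eps-optimality} to convert a utility bound into an optimal-policy bound (this is where the factor of $2$ enters). Since all future reward accumulated after stage $h$ is discounted by $\disc^{\steps h}$, the total contribution of stage $h$ to $\linf{\BOutil - \kBOutil}$ is at most $2 \disc^{\steps h} \Delta_h$.

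Next I would plug in the two bounds provided: $\Delta_h \le \steps \epsilon_h$ (Lemma~\ref{lem:stage-error}) and $\epsilon_h \le \epsilon_0/h$ (Assumption~\ref{ass:approximate-belief}), and sum over stages:
\[
\linf{\BOutil - \kBOutil}
\le 2 \steps \epsilon_0 \sum_{h=1}^{\infty} \frac{\disc^{\steps h}}{h}.
\]
Evaluating the series via the standard identity $\sum_{h\ge 1} x^h/h = \ln\frac{1}{1-x}$ with $x = \disc^{\steps}$ yields the claimed $2\epsilon_0 \steps \ln \frac{1}{1 - \disc^{\steps}}$.

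The main obstacle I anticipate is the telescoping/coupling step: I need to make precise how a gap between two Bayes utilities at each stage boundary propagates forward, given that both $\BOpol$ and $\kBOpol$ induce their own (different) posterior trajectories. The clean way is to bound the gap recursively from the last stage backward via backward induction on the BAMDP, using Lemma~\ref{lem:eps-optimality} at each stage to compare the optimal $\steps$-step continuation under $\hat\bel_h$ versus the continuation used by $\BOpol$, and absorbing the discounted future gap into the recursion so that the total is additive in the per-stage errors $\Delta_h$. Once that recursion is set up, the rest of the proof is the routine summation above.
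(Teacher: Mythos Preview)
Your proposal is correct and follows essentially the same route as the paper: decompose the gap stage by stage with discount $\disc^{\steps h}$, apply Lemma~\ref{lem:stage-error} and Lemma~\ref{lem:eps-optimality} to get the $2\Delta_h$ per-stage bound, plug in Assumption~\ref{ass:approximate-belief}, and sum the series $\sum_{h\ge 1}\disc^{\steps h}/h=\ln\frac{1}{1-\disc^{\steps}}$. In fact you are more explicit than the paper about the coupling/telescoping step---the paper simply writes $\linf{\BOutil-\kBOutil}=\sum_{h=1}^{\finalStage}\disc^{h\steps}(\BOutilH-\kBOutilH)$ as the starting equality and proceeds, whereas you correctly flag that passing from the root gap to a sum of stage-wise gaps (under possibly different posterior trajectories) deserves a backward-induction justification.
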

\fi
\begin{proof}%[Proof of Lemma~\ref{lemma:anytime}]
  First note that if the belief is only changing every $\steps$ steps, then the Bayes-optimal policy is Markovian over $\steps$ steps. This means that finding a $\steps$-step Markovian policy starting from belief $\bel_h$ is the same as finding the optimal policy for a fixed belief $\hat{\bel}_h$.This allows us to use Lemma~\ref{lem:stage-error} to bound the error of the $\steps$-step policy.

  Let $f(\pol) \defn \util^\pol_{\bel_h}$ and $g(\pol)\defn\util^\pol_{\hat{\bel_h}}$. Therefore,
  \begin{align*}
   % \begin{split}
    \linf{\BOutil - \kBOutil} &= \sum_{h=1}^H \disc^{h \steps} (\BOutilH -  \kBOutilH)\\
    &\stackrel{(a)}{\leq} 2\sum_{h=1}^\finalStage \disc^{h\steps} \Delta_h\\
     &\stackrel{(b)}{\leq} 2\sum_{h=1}^\finalStage  \disc^{h\steps} \steps \epsilon_h \\
      &\stackrel{(c)}{\leq} 2\steps \times \sum_{h=1}^\finalStage  \disc^{h\steps} \epsilon_0 /h \\
      &\leq 2\epsilon_0 \steps \times \sum_{h=1}^\infty  \disc^{\steps h} / h \\
      %&\leq \epsilon_0 \steps \times \sum_{h=1}^\infty  \disc^{\steps h}  \\
      &\stackrel{(d)}{=}  2\epsilon_0 \steps \times \ln \frac{1}{1 - \disc^{\steps}}
   % \end{split}
  \end{align*}
  (a) is obtained using Lemma~\ref{lem:eps-optimality}.
  
  (b) is a consequence of Lemma~\ref{lem:stage-error}.
  
  (c) is a consequence of Assumption~\ref{ass:approximate-belief}.
  
  (d) is derived from the fact that $\sum_{x=1}^{\infty} \frac{a^x}{x} = -\log(1-a)$ for $1 > a \geq 0$.
  
  Therefore, $\linf{\BOutil - \kBOutil} \leq  2\epsilon_0 \steps \ln  \frac{1}{1 - \disc^{\steps}}$.
\end{proof}
%\vspace*{-1em}
%\begin{lemma}[Error of Thompson-sampling-distributed Policy]
%  Let $\TSpol$ denote the Thompson sampling-distributed policy for $\bel$. Under Assumption~\ref{ass:belief-correlation}, the expected regret of Thompson sampling is $O((1 - \disc)^{-2})$.
%  
%  $$\linf{\BOutil - \TSutil} \leq \frac{C}{(1-\gamma)^2}.$$
%\end{lemma}
%\begin{proof}%[Proof of Lemma~\ref{lem:ts_regret}]
%  \begin{align*}
%    \begin{split}
%      &\BOutil(s) - \TSutil (s)\\
%      &= \int_{\mathcal{M}} \dd\bel(\mdp) \left[V_\mdp^{\BOpol} (s)  - \int_{\mathcal{M}} V_{\mdp}^{\pol^*(\mdp')} (s) d \bel(\mdp')\right]\\		
%      &=  \int_{\mathcal{M}} \int_{\mathcal{M}} \left[V_\mdp^{\pol^*_\mdp} (s)  -  V_{\mdp'}^{\pol^*_{\mdp'}} (s) \right] d\bel(\mdp) d\bel(\mdp')\\
%      &\leq  \int_{\mathcal{M}} \int_{\mathcal{M}}  \frac{2 \MDPdist{\mdp}{\mdp'}}{(1 - \disc)^2} d\bel(\mdp) d\bel(\mdp') 
%      \leq
%      \frac{C}{(1 - \disc)^2}.
%    \end{split}
%  \end{align*}
%  % where is second-last line follows from $\util(\pol_\mdp^*, \mdp) = \sum_{\mdp'} \bel(\mdp') \util(\pol_\mdp^*, \mdp)$ and 
%  The last inequality is a direct consequence of Assumption~\ref{ass:belief-correlation}. Hence, we get 
%  \[
%  \linf{\BOutil - \TSutil} \leq \frac{C}{(1-\gamma)^2}.
%    \]
%\end{proof}
% \begin{lemma}  \label{lem:compound-error}
% If $\epsilon_k \leq c^k$ for any $0 \geq c <1$, then
% \[
% \util^*_\beta - \util(\pol^K_1, \beta) \leq  \frac{2 \steps }{1 - \disc^\steps c}
% \]\vspace*{-2.5em}
% \end{lemma}
% \begin{proof}[Proof of Lemma~\ref{lem:compound-error}]
% 
% \end{proof}

\begin{lemma}[Error of Thompson-sampling-distributed\footnote{i.e. the optimal polices of MDPs distributed according to the current belief.} episodic Policy]
Under Assumption~\ref{ass:belief-correlation}%, the expected regret of Thompson sampling is $O(\frac{1}{1 - \disc})$.\\
  $$\linf{\kBOutil - \TSutil} \leq \frac{2(\steps C + \disc^\steps)}{(1 - \disc)}.$$
\end{lemma}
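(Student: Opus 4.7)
My approach is to first reduce the infinity-norm difference $\linf{\kBOutil - \TSutil}$ to a per-episode comparison and then aggregate the $\finalStage$ per-episode terms through the geometric factor $\sum_{h} \disc^{\steps h}$. Writing the per-episode Bayesian utilities as integrals over the belief, I would start with
\[
\util^{\kBOpolH}_{\bel_h}(s) - \util^{\TSpolH}_{\bel_h}(s) = \int\!\!\int \bel_h(\mdp)\bel_h(\mdp')\bigl[V_{0,\steps}^{\pol^*_{\bel_h},\mdp}(s) - V_{0,\steps}^{\pol^*_{\mdp'},\mdp}(s)\bigr]\dd\mdp\dd\mdp',
\]
which makes explicit that TS samples $\mdp' \sim \bel_h$ and deploys the single-MDP optimal policy $\pol^*_{\mdp'}$, while the restricted-Bayes policy $\pol^*_{\bel_h}$ is the one maximising the outer expectation. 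Since $\pol^*_\mdp$ is optimal on MDP $\mdp$, I can upper-bound the bracketed quantity by $V_{0,\steps}^{\pol^*_{\mdp},\mdp}(s) - V_{0,\steps}^{\pol^*_{\mdp'},\mdp}(s)$, which is the object on which to deploy a simulation-style argument.

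\textbf{Simulation and application of Assumption~\ref{ass:belief-correlation}.} Next I would insert $\pm V_{0,\steps}^{\pol^*_{\mdp'},\mdp'}(s)$ and apply the standard $\steps$-horizon simulation lemma to each of the two resulting pairs. Each piece is controlled by a factor proportional to $\steps\,\MDPdist{\mdp}{\mdp'}$ that captures the in-episode divergence, plus a $\disc^\steps$ remainder that accounts for the continuation value after step $\steps$ (which is the only place the post-episode discounting shows up on a pointwise per-state basis). This produces a pointwise integrand bound of the form $2\bigl(\steps\,\MDPdist{\mdp}{\mdp'} + \disc^\steps\bigr)$. Integrating against $\bel_h(\mdp)\bel_h(\mdp')$ and invoking Assumption~\ref{ass:belief-correlation} pointwise as $\bel_h(\mdp)\bel_h(\mdp')\,\MDPdist{\mdp}{\mdp'} \le C$, the MDP-distance term collapses to the constant $C$, giving a per-episode bound of $2(\steps C + \disc^\steps)$.

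\textbf{Aggregation.} Finally, I would sum over episodes $h = 0, 1, \ldots, \finalStage - 1$ with the episode-level discount $\disc^{\steps h}$, and use
\[
\sum_{h=0}^{\finalStage-1} \disc^{\steps h} \le \frac{1}{1 - \disc^{\steps}} \le \frac{1}{1 - \disc},
\]
where the last inequality follows from $\disc \in (0,1]$ and $\steps \ge 1$. Combining with the per-episode bound yields the claimed $\linf{\kBOutil - \TSutil} \le \frac{2(\steps C + \disc^\steps)}{1 - \disc}$.

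\textbf{Main obstacle.} The most delicate point is the simulation step: one needs to split $V_{0,\steps}^{\pol^*_{\mdp},\mdp} - V_{0,\steps}^{\pol^*_{\mdp'},\mdp}$ into a clean in-episode divergence (linear in $\steps\,\MDPdist{\mdp}{\mdp'}$) and a truncation remainder (the $\disc^\steps$ piece), so that the two terms align with the structure of the final bound and the factor of $2$ emerges naturally from the two applications of optimality (as in Lemma~\ref{lem:eps-optimality}). A secondary subtlety is the measure-theoretic reading of Assumption~\ref{ass:belief-correlation}: the pointwise inequality has to deliver the integral bound $\int\!\!\int \bel_h(\mdp)\bel_h(\mdp')\MDPdist{\mdp}{\mdp'}\dd\mdp\dd\mdp' \le C$, which is immediate once the pointwise inequality is multiplied through by $\MDPdist{\mdp}{\mdp'}$, but should be stated explicitly for the non-discrete case.
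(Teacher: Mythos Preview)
Your route differs from the paper's in a substantive way. The paper does \emph{not} decompose into per-episode pieces and then aggregate via $\sum_h \disc^{\steps h}$. It works directly with the full-horizon difference: step~(a) uses $\kBOutil \leq \BOutil$ to replace the $\steps$-restricted optimum by the unrestricted Bayes-optimal; step~(b) truncates both $\BOutil$ and $\TSutil$ to $\steps$ steps, which produces the remainder $\frac{2\disc^\steps}{1-\disc}$ in one shot; step~(c) bounds the resulting integrand $V_{0,\steps}^{\pol^*_\mdp,\mdp} - V_{0,\steps}^{\pol^*_{\mdp'},\mdp'}$ by $\frac{2\steps\,\MDPdist{\mdp}{\mdp'}}{1-\disc}$ via the approximate-MDP lemma of Even-Dar et al.; step~(d) applies Assumption~\ref{ass:belief-correlation}. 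So in the paper the factor $\frac{1}{1-\disc}$ enters through the truncation and the simulation lemma themselves, not through a geometric sum over episodes. Note also that the paper's integrand is $V_{0,\steps}^{\pol^*_\mdp,\mdp} - V_{0,\steps}^{\pol^*_{\mdp'},\mdp'}$ (each optimal policy evaluated on \emph{its own} MDP), whereas you keep the evaluation MDP fixed and only then insert $\pm V_{0,\steps}^{\pol^*_{\mdp'},\mdp'}$; your reading of $\TSutil$ is arguably the more faithful one, and after the splitting you land on the same simulation-lemma object.

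There is, however, a soft spot in your plan: the per-episode $\disc^\steps$ remainder is not justified as written. The episodic utilities $\util^\pol_{\bel_h} = \int V_{0,\steps}^{\pol,\mdp}\bel_h(\mdp)\,d\mdp$ are already $\steps$-step truncated, so the simulation lemma on $V_{0,\steps}$ yields only the $\steps\,\MDPdist{\mdp}{\mdp'}$ piece; there is no ``continuation value after step $\steps$'' inside a single episode. In the paper that $\disc^\steps$ term arises precisely because one truncates the \emph{infinite}-horizon utilities once, not episode by episode. Relatedly, your aggregation implicitly assumes both $\kBOpol$ and $\TSpol$ arrive at the same $\bel_h$ at each episode start, which is not literally the case; the $\linf{\cdot}$ over states absorbs the state-distribution mismatch but not the belief mismatch, and this is where a continuation-type term would have to be paid if you insist on the episodic route. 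The paper sidesteps both issues by never summing over episodes in the proof of this lemma.
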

\begin{proof}%[Proof of Lemma~\ref{lem:ts_regret}]
  We want to show that the value of the TS-episodic policy is not much worse than the Bayes-optimal $\steps$-step stationary policy.
  \begin{align*}
    \begin{split}
      &\quad \kBOutil - \TSutil\\
      &\stackrel{(a)}{\leq} \BOutil(s) - \TSutil (s)\\
      &\stackrel{(b)}{\leq}
      \int_{\mathcal{M}} d\mdp \bel(\mdp) \left[V_{0,\steps}^{\BOpol,\mdp} (s)  - \int_{\mathcal{M}} d\mdp' V_{0,\steps}^{\pol^*_{\mdp'},\mdp'} (s)  \bel(\mdp')\right] \\
&+ \frac{2 \disc^\steps}{1 - \disc}
\\		
&=  \int_{\mathcal{M}} \int_{\mathcal{M}} \left[V_{0,\steps}^{\pol^*_\mdp,\mdp} (s)  -  V_{0,\steps}^{\pol^*_{\mdp'},\mdp'} (s) \right] \bel(\mdp) \bel(\mdp') d\mdp d\mdp' \\
&+ \frac{2 \disc^\steps}{1 - \disc}\\
&\stackrel{(c)}{\leq}  \int_{\mathcal{M}} \int_{\mathcal{M}} \frac{2 \steps \MDPdist{\mdp}{\mdp'}}{(1 - \disc)} \bel(\mdp) \bel(\mdp') d\mdp d\mdp'
      \\
      &+ \frac{2 \disc^\steps}{1 - \disc}\\
      &\stackrel{(d)}{\leq}
      \frac{2(\steps C + \disc^\steps)}{(1 - \disc)}.
    \end{split}
  \end{align*}
  (a) follows by the definition of the Bayes-optimal policy. 
  
  (b) follows by truncating the reward sequence to $\steps$ steps. 
  
  (c) follows  from the approximate MDP Lemma~\citep[Lemma 4]{colt03:Even-Dar:MDP-Equivalence} by definition of the MDP distance $\MDPdist{\mdp}{\mdp'}$. 
  
  (d) is a direct consequence of Assumption~\ref{ass:belief-correlation}. 
\end{proof}

\paragraph{Proof of Theorem~\ref{thm:pacbamdp}.}
For the final proof, we add the effect of sampling in the total error from the two lemmas:
\begin{proof}%[Proof of Theorem~\eqref{thm:pacbamdp}]
  Merging the errors due to Thompson-sampling-distributed error and the anytime-error from Lemma~\eqref{lem:ts_regret} and~\eqref{lemma:anytime}, we obtain for all $s$
  \[
    \util^{\TSpol}_\beta(s) \geq \util^*_{\beta}(s) - \left( 2\epsilon_0 \steps \ln  \frac{1}{1 - \disc^{\steps}} + \frac{2(\steps C + \disc^\steps)}{(1 - \disc)}\right).
  \]
  We can then use Hoeffding's bound since utility of $\DSpol$ is just sampled utility of $\TSpol$.
  For simplicity, let $\bar{\rho}$ be the expected error of a TS policy and $\rho_i$ of the $i$-th sampled policy and let \[\varepsilon = \sqrt{\frac{\ln(n/\delta)}{2N}}(1 - \disc)^{-1}.\] Then, we bound the probability of minimal-error policy among samples has an error more than $\varepsilon$ than the expectation:
  \begin{gather*}
    \Pr\left(\min \cset{\rho_i}{i=1, \ldots, N} \geq \bar{\rho} + \varepsilon \right) \\
    \leq \Pr\left(\frac{1}{N} \sum_{i=1}^N \rho_i \geq \bar{\rho} + \varepsilon \right) \leq \delta / n,
  \end{gather*}
  where the last inequality is from Hoeffding, and the boundedness of rewards. Since there are $n$ such policies, and with a union bound, the probability that any policy has an error of more than $\varepsilon$ worse than the expected, is bounded by $\delta$.
\end{proof}

%%% Local Variables:
%%% mode: latex
%%% TeX-master: "aistats20"
%%% End:

\fi
\section{Root sampling and look-ahead view equivalence}
Denote $\expect_\bel$ as the expectation under marginals $\nu$ and $\tau$.
The optimal for Bayesian value function can be calculated by noting the following equivalence relation:
\begin{align}
  &\util_{\bel_t}^{\pi}(s_t) \notag\\
  &= \int_{\mathcal{M}} V_{\mdp}^{\pi}(s_t)\bel_t(\mdp)d\mdp \notag \\
  &= \int_{\mathcal{M}} \int_{r_{t+1}} r_{t+1} \Pr_\mdp (r_{t+1}|s_t,a_t)  \bel_t(\mdp)d\mdp \; dr \notag \\
  &+ \gamma \int_{\mathcal{M}} \sum_{s' \in s_{t+1}}\Pr_\mdp(s'|s_t,a_t) V_\mdp^\pol(s') \bel_t(\mdp)d\mdp \notag\\
  &= \int_{r_{t+1}} \int_{\mathcal{M}} r_{t+1} \Pr_\mdp (r_{t+1}|s_t,a_t)  \bel_t(\mdp)d\mdp \; dr \notag \\
  &+ \gamma \sum_{s' \in s_{t+1}} \int_{\mathcal{M}} V_\mdp^\pol(s') \Pr_\mdp(s'|s_t,a_t) \bel_t(\mdp)d\mdp \label{eq:root-view}\\
  &=\int_{r_{t+1}} \tau(r_{t+1}|\omega_t,a_t) \int_{\mathcal{M}} r_{t+1} \bel_{t+1}(\mdp)d\mdp \; dr \notag \\
  &+ \gamma \sum_{s' \in s_{t+1}} \nu(\omega_{t+1}|\omega_t ,a_t ) \int_{\mathcal{M}} V_\mdp^\pol(s') \bel_{t+1}(\mdp)d\mdp \label{eq:back-view}\\
  &= \int_{r_{t+1}} r_{t+1} \tau(r_{t+1}) dr + \gamma \sum_{\omega_{t+1}} \util^\pol_{\bel_{t+1}}(s_{t+1})  \nu(\omega_{t+1}) \label{eq:martingale} \\
  &= \expect_\bel[r_{t+1} + \gamma V^\pol_{t+1}(s_{t+1},\bel_{t+1})] \tag{Q.E.D}
\end{align}
We obtain Eq.\eqref{eq:back-view} from Eq.\eqref{eq:root-view} using Bayes rule (eq.\eqref{eq:belief-update}) and the fact that marginal distributions $\nu$ and $\tau$ are independent of the next belief $\bel_{t+1}$ (since they are normalization constants in eq.\eqref{eq:belief-update}).\\
Eq.\eqref{eq:martingale} follows from the definition of Bayesian value function and the fact that $\bel_{t+1}$ adds no new information about $r_{t+1}$, i.e, its a martingale.
% \newpage

\section{Parameter Selection for Experiments}
The dependence of lookahead parameter $\steps$ on other parameters could be found by assuming that the loss terms due to parameter $\steps$ being less than due to sampling, i.e., 
	\begin{align}
	&\Rightarrow 2\epsilon_0 \steps \ln  \frac{1}{1 - \disc^{\steps}}  + \frac{2(\steps C + \disc^\steps)}{(1 - \disc)}
	\leq \sqrt{\frac{\ln M/\delta}{2N (1 - \gamma)^{2}}} \notag \\
	&\Rightarrow \frac{2\epsilon_0 \steps}{1 - \disc}  + \frac{2(\steps C + \disc^\steps)}{(1 - \disc)}
	\leq \sqrt{\frac{\ln M/\delta}{2N (1 - \gamma)^{2}}} \label{eq:log-property} \\
	&\Rightarrow 2(\epsilon_0 + C) \steps + 2\disc^\steps \leq \sqrt{\frac{\ln M/\delta}{2N}} \notag \\
	&\Rightarrow \steps \leq (\sqrt{\frac{\ln M/\delta}{8N}} - \disc^\steps)\frac{1}{\epsilon_0 + C} \notag
	\end{align}
Hence $\steps$ needs to grow only inversely to the root of number of policy samples ($\sqrt{\npolicies}$), and its dependence on $\nsamples$ is very slow (only $\sqrt{\ln \nsamples}$).
% \paragraph{Parameter selection:}
\label{sec:tuning}
\begin{table*}[t]
	%\vskip 0.15in
	\begin{center}
		\begin{small}
			\begin{sc}
				\begin{tabular}{lccccr}
					\toprule
					Algorithm & Chain & DoubleLoop & Grid5 & Grid10 & Maze\\
					\midrule
					Sparser-RTDP  & (8,4,10,2) & (4,4,18,2)  & (4,2,50,1) & (4,2,200,2)& (4,4,500,1)\\
					Sparser-PI    & (4,4,5,2) & (4,4,18,2)  & (2,2,25,1) & (2,2,100,2)& (4,2,100,1)\\
					BAMCP & (auto,100) & (15,100) & (50,10000) & (50,10000)& (50,1000)\\
					BFS3    & (10,100) & (10,10) & (10,10) & (5,10)& (5,10)\\
					SBOSS    & (8,3) & (2,3)  & (2,3) & (2,3) & (2,3)\\
					\bottomrule
				\end{tabular}
			\end{sc}
		\end{small}
	\end{center}
	\caption{Best parameters obtained from the initial 10 tuning runs.}
	\label{optimalParams}
\end{table*}

Here we describe the chosen hyperparameters for each algorithm shown in Table~\ref{optimalParams}. For each algorithm, these are:

\begin{enumerate}
\item BAMCP: (depth,no. of simulations): No. of simulations range from $10$ to $10^5$, or until the environment time-limit is reached. Depth is between \{15,50,\textit{auto}\}, using the original implementation.
\item SPARSER : (no.of sampled policies,no.of samples per policy,depth parameter K, Horizon). PI is performed upto 1e-4 accuracy, while RTDP performs lookahead planning of depth 15 for all environments, except larger Grid10 and Maze, where depth is set to 50.
\item BFS3: (branching-factor,no.of simulations). Depth is fixed at 15 for all except larger Grid10 and Maze environment, for which it is 50. Branching factor is between \{5,10,15\} and no. of simulations between \{10,10,1000\}.
\item SBOSS: (no.of samples,sampling threshold) Cross-validated against \{2,4,8,16,32\} and \{3,5,7\} respectively.
\end{enumerate}

\section{Additional plots}

To target larger audience, we provide Python API for the original C++ implementation used in the paper, using Pybind11~\citep{jakob2017pybind11}. It is available at the following link: https://github.com/revorg7/DeepSparseSampling 
\\

This API was used in conjunction with Bsuite environment API by Deepmind~\citep{osband2019behaviour} to draw Regret plots comparing DSS to BDQN (Bootstrapped DQN) and TS (Thompson sampling) in Figure~\ref{fig:regret}.

We did this to promote reusability of DSS, as well as demonstrate reproducibility of DSS's advantage over model-free algorithms such as BDQN for discrete grid-world environments (upto 20x20 atleast).

%\begin{center}
%\includegraphics[width=.3\linewidth]{example-image}\quad
%\includegraphics[width=.3\linewidth]{example-image-a}
%\end{center}

\begin{figure*}[t!]
    \centering
	\includegraphics[width=0.9\textwidth,totalheight=0.4\textheight]{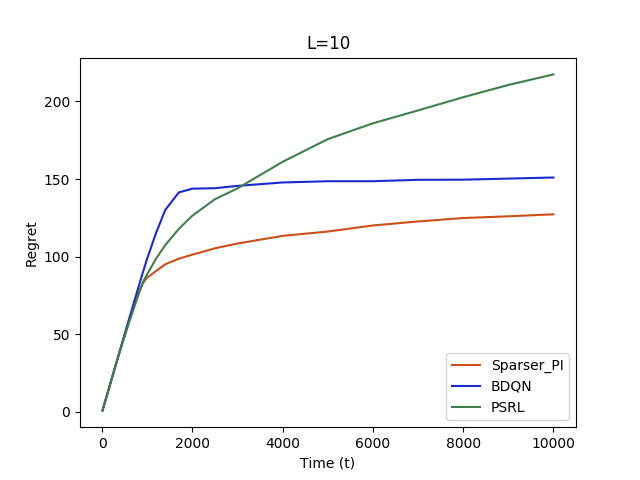}
	\includegraphics[width=0.9\textwidth,totalheight=0.4\textheight]{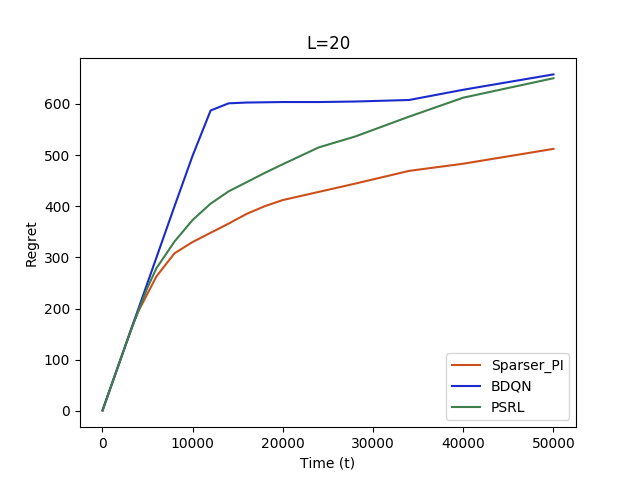}
    \caption{Regret plots(lower is better) for Deep-sea environment for different size parameter 'L'.}
    \label{fig:regret}
\end{figure*}

\end{document}